\def\algbackskip{\hskip-\ALG@thistlm}
\algnewcommand\algorithmicforeach{\textbf{for each:}}
\algnewcommand\ForEach{\item[ \algorithmicforeach]}
\newcommand\MyBox[2]{
  \fbox{\lower0.75cm
    \vbox to 1.7cm{\vfil
      \hbox to 1.7cm{\hfil\parbox{1.4cm}{#1\\#2}\hfil}
      \vfil}%
  }%
}
  \providecommand\BibTeX{{%
    \normalfont B\kern-0.5em{\scshape i\kern-0.25em b}\kern-0.8em\TeX}}}
\begin{document}

\title{Physics-Guided Abnormal Trajectory Gap Detection}


\author{Arun Sharma}
\email{sharm485@umn.edu}
\orcid{0002-6908-6960}
\affiliation{
  \institution{University of Minnesota, Twin Cities}
  \city{Minneapolis}
  \state{Minnesota}
  \country{USA}
}

\author{Shashi Shekhar}
\email{shekhar@umn.edu}
\affiliation{%
  \institution{University of Minnesota, Twin Cities}
  \city{Minneapolis}
  \state{Minnesota}
  \country{USA}
}

\renewcommand{\shortauthors}{Arun Sharma and Shashi Shekhar}

\begin{abstract}
Given trajectories with gaps (i.e., missing data), we investigate algorithms to identify abnormal gaps in trajectories which occur when a given moving object did not report its location, but other moving objects in the same geographic region periodically did. The problem is important due to its societal applications, such as improving maritime safety and regulatory enforcement for global security concerns such as illegal fishing, illegal oil transfers, and trans-shipments. The problem is challenging due to the difficulty of bounding the possible locations of the moving object during a trajectory gap, and the very high computational cost of detecting gaps in such a large volume of location data. The current literature on anomalous trajectory detection assumes linear interpolation within gaps, which may not be able to detect abnormal gaps since objects within a given region may have traveled away from their shortest path. In preliminary work, we introduced an abnormal gap measure that uses a classical space-time prism model to bound an object's possible movement during the trajectory gap and provided a scalable memoized gap detection algorithm (Memo-AGD). In this paper, we propose a Space Time-Aware Gap Detection (STAGD) approach to leverage space-time indexing and merging of trajectory gaps. We also incorporate a Dynamic Region Merge-based (DRM) approach to efficiently compute gap abnormality scores. We provide theoretical proofs that both algorithms are correct and complete and also provide analysis of asymptotic time complexity. Experimental results on synthetic and real-world maritime trajectory data show that the proposed approach substantially improves computation time over the baseline technique.
\end{abstract}

\begin{CCSXML}
<ccs2012>
 <concept>
  <concept_id>10010520.10010553.10010562</concept_id>
  <concept_desc>Computer systems organization~Embedded systems</concept_desc>
  <concept_significance>500</concept_significance>
 </concept>
 <concept>
  <concept_id>10010520.10010575.10010755</concept_id>
  <concept_desc>Computer systems organization~Redundancy</concept_desc>
  <concept_significance>300</concept_significance>
 </concept>
 <concept>
  <concept_id>10010520.10010553.10010554</concept_id>
  <concept_desc>Computer systems organization~Robotics</concept_desc>
  <concept_significance>100</concept_significance>
 </concept>
 <concept>
  <concept_id>10003033.10003083.10003095</concept_id>
  <concept_desc>Networks~Network reliability</concept_desc>
  <concept_significance>100</concept_significance>
 </concept>
</ccs2012>
\end{CCSXML}

\ccsdesc[500]{Information Systems~Data Mining}
\ccsdesc[500]{Computing Methodologies~Spatial and Physical Reasoning}

\keywords{Trajectory Gaps, Space Time Prism, Time Geography, Anomaly Gaps, Trajectory Mining}

\maketitle

\section{Introduction}
Given multiple trajectory gaps and a signal coverage map based on historical object activity, we find possible abnormal gaps in activity where moving objects (e.g., ships) may have behaved abnormally, such as not reporting their locations in an area where other ships historically did report their location. The top half of Figure \ref{fig:problemstatement} shows the problem's \textit{input}, which includes a map of the signal coverage area (grey cells) for a set of derived historical trajectories (Figure  \ref{subfig:Input1}) and trajectory gaps $G_{1}$, ..., and $G_{10}$ (Figure \ref{subfig:Input2}). Cells are created by mapping the spatial extent of latitude and longitude coordinates across all historical location traces. A cell is assigned to a signal coverage (i.e., grey) if it contains historic location traces whose total number exceeds a certain threshold $\theta$ (more details in section \ref{subsection:basicconcept}). The bottom half of the figure shows the output where gaps $G_{1}$, $G_{4}$ $G_{6}$ and $G_{9}$ are entirely outside the signal coverage map (Figure \ref{subfig:Intermediate Output}), indicating weak signal coverage. In contrast, the rest of the gaps overlap the signal coverage map. The absence of location reporting in an area known to have signal coverage may be interpreted as intentional behavior by a ship that temporarily switched off its location broadcasting device. The output shown in Figure \ref{subfig:Intermediate Output} reflects on \textit{intermediate output} stage where trajectory gaps ($G_{1}$, ..., $G_{10}$) are modeled in the form of geo-ellipses \cite{pfoser1999capturing,miller1991modelling} along with their intersections with the signal coverage map. Figure \ref{subfig:Intermediate Output} also shows two pairs of gaps $G_{2}$, $G_{3}$ and $G_{7}$, $G_{8}$, that have highly intersecting regions (i.e., dark grey cells), which suggests two ships in a rendezvous potentially engaging in illegal activity. The gaps entirely outside the signal coverage area have been filtered out. Figure \ref{subfig:Final Output} shows the \textit{final output} where gap pairs $G_{2}$, $G_{3}$ and $G_{7}$, $G_{8}$ are merged to their overlapping regions and gap $G_{10}$ is filtered out since it did not meet the user-defined \textit{priority threshold}.

\begin{figure}[htp]
    \centering 
\begin{subfigure}{0.35\textwidth}
  \includegraphics[width=\linewidth]{./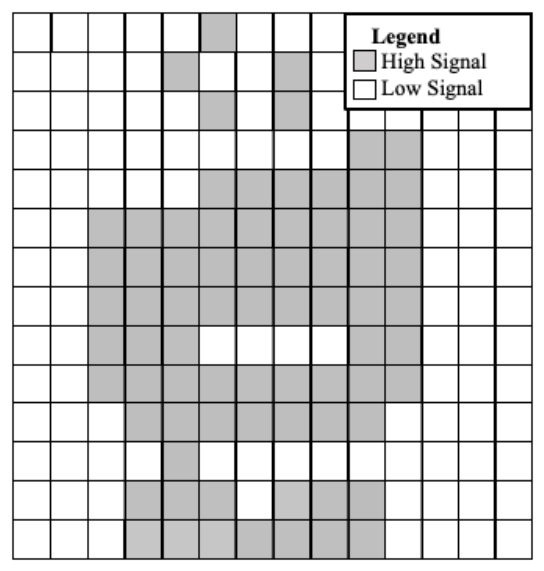}
  \captionsetup{justification=centering}
  \caption{\textbf{Input 1:} Signal coverage map (SCM)\\}
  \label{subfig:Input1}
\end{subfigure}\hfil 
\begin{subfigure}{0.35\textwidth}
  \includegraphics[width=\linewidth]{./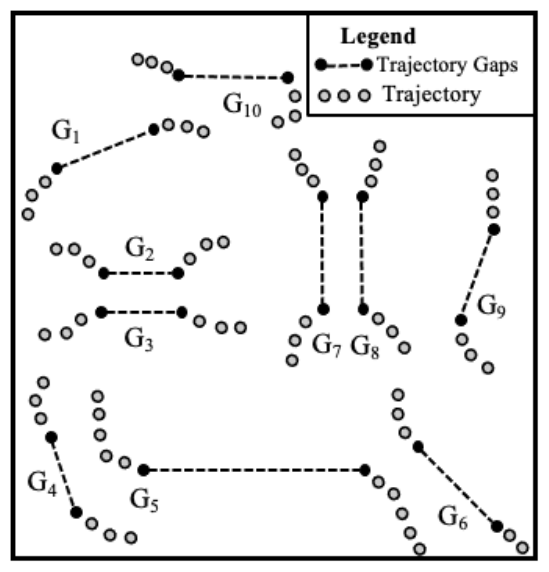}
  \captionsetup{justification=centering}
  \caption{\textbf{Input 2:} Trajectories with trajectory gaps ($G_{i}$)\\}
  \label{subfig:Input2}
\end{subfigure}\hfil 
\medskip
\begin{subfigure}{0.35\textwidth}
  \includegraphics[width=\linewidth]{./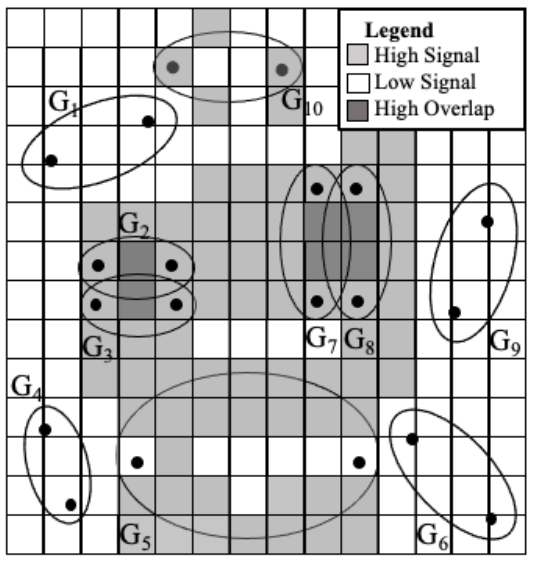}
  \captionsetup{justification=centering}
  \caption{\textbf{Intermediate Output}: Abnormal gap computation\\}
  \label{subfig:Intermediate Output}
\end{subfigure}\hfil 
\begin{subfigure}{0.35\textwidth}
  \includegraphics[width=\linewidth]{./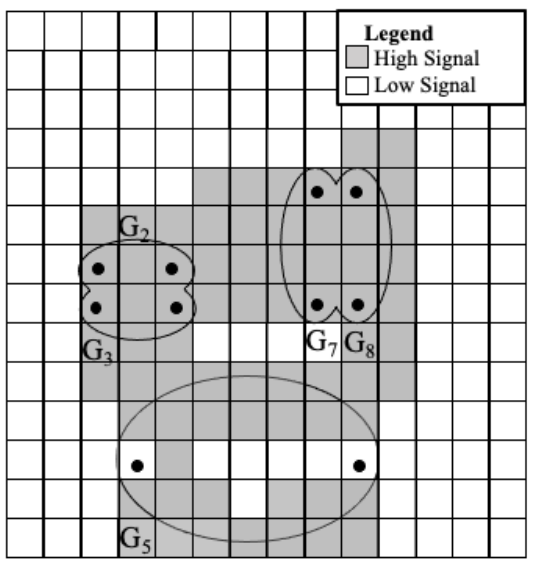}
  \captionsetup{justification=centering}
  \caption{\textbf{Final Output:} Summarized abnormal gaps}
  \label{subfig:Final Output}
\end{subfigure}\hfil 
\caption{An illustration of the abnormal gap region problem (Best in color).}
\label{fig:problemstatement}
\end{figure}





Societal applications for analyzing trajectory gaps are related to maritime safety, homeland security, epidemiology, and public safety. For instance, illegal fishing, trans-shipments, illegal oil transfers, etc., are important for global maritime safety and regulatory enforcement. Such activities can be restricted and managed by identifying frequent missing signals from GPS trajectories of oil or fishing vessels. Table \ref{ApplicationDomain} lists many use cases from various domains. For instance, in astrophysics, estimating lost planets by tracking their orbital trajectories has been traditionally studied by Gauss \cite{astronomy}. Signal strength while tracking marine animals in the deep sea tends to get lost, and a more accurate estimation model is needed. Similarly, analyzing trajectory gaps can help to find lost areal vehicles by local search space on where the object could have landed or crashed \cite{spaceshuttle}. Such trajectory reconstruction techniques require accurate physics-based estimation in order to reduce time-intensive post-processing because it often requires manual inspection of gaps in trajectories spread over a large geographic space.


\begin{table}\scriptsize
\footnotesize
\centering
\caption{Application Domain and Use Cases Examples for Abnormal Gap Detection}
\label{ApplicationDomain}
\begin{tabular}{p{3.0cm}p{9cm}p{5cm}}
\hline
Application Domain  &  Example Use Cases\\ 
\hline
Astrophysics   & Manual tracking of astronomical objects (e.g., comets) which are no longer visible or temporarily lost in space.\\ 
\hline
Marine Biology  & Tracking marine animals (e.g., whales, white sharks) trajectory movements in deep seas where signals tend to get lost for several minutes.\\ 
\hline
Aviation and Defense & Localizing the search space to track lost areal vehicles in open space.\\ 
\hline
\end{tabular}
\end{table}

There are two challenges to this problem. First, many interpolation or inference methods can lead to missed patterns since moving objects do not always travel on a straight path. In addition, probabilistic methods such as Gaussian processes provide some estimation of where the object could have potentially deviated but do not holistically capture the object's movement in the trajectory gap. Our approach, which is based on a space-time prism, identifies a larger spatial region surrounding a gap and captures all object's possible movement capabilities, which Gaussian processes or linear interpolation-based methods could have missed. Computing such deviations captures missing patterns in real-world scenarios (e.g., illegal fishing near marine protected habitats). Second, there is the challenge of handling large data volumes distributed over a considerable geographical space. For example, MarineCadastre \cite{aisdataUS}, an open-source automatic identification system (AIS) dataset, contains records with more than 30 attributes (e.g., speed, draft) for thousands of ships taken every minute over a decade, resulting in terabytes of data.

\begin{figure}[ht]
    \centering
    \includegraphics[width=0.50\textwidth]{./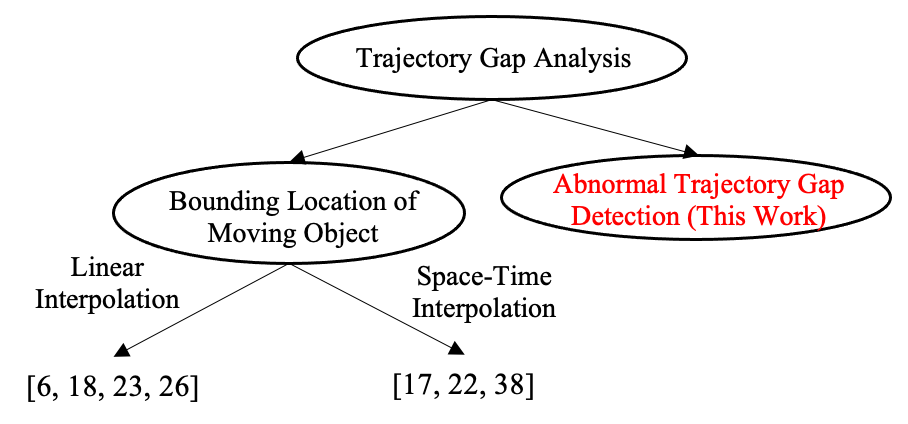}
    \caption{Comparison of Related Work\\}
    \label{fig:ComparisonofRelatedWork}
\end{figure}


Most of the studies \cite{pallotta2013vessel,chen2013iboat,lei2016framework} on trajectory gaps that bound the location of the moving object is based on shortest path discovery. For instance, \cite{lei2016framework} derives knowledge of maritime traffic to detect low-likelihood behaviors and predict a vessel’s future positions using linear interpolation. Other linear interpolation methods are based on K Nearest Neighbor \cite{rancourt1999estimation} reputation where each point is predicted based on Constant Velocity Model (CVM) followed by nearest neighbor computation. Some studies \cite{miller1991modelling,kuijpers2009modeling} do bound an object's possible location by providing some deterministic methods (e.g., space-time prism model) and probabilistic methods \cite{winter2010directed}, but none of the approaches address abnormality within trajectory gaps. Figure \ref{fig:ComparisonofRelatedWork} shows a comparison with related work.

Our previous work defined an abnormal trajectory gap measure and provided a scalable algorithm Memo-AGD \cite{sharma2022abnormal} that bounds the entire range of an object's possible movements during the trajectory gap to derive possible anomaly hypotheses for human analysts. However, the previous approach was ineffective in handling spatial comparisons and computing abnormality scores. Here we extend this work by leveraging space-time indexing techniques to further reduce the number of required comparison operations. We also optimize merging operations to promote a large number of merged groups (Figure \ref{subfig:Intermediate Output}) and lower abnormal gap measure computations while preserving correctness and completeness. \\

\textbf{Contributions} : 
\begin{itemize}
\item We propose a Space-Time Aware Gap Detection (STAGD) with Dynamic Region Merge (DRM) approach where STAGD efficiently performs space-time indexing and merging of trajectory gaps, and DRM further improves the computations of gap abnormality scores.

\item We evaluate the proposed algorithms theoretically for correctness and completeness and also analyze the time complexity of the baseline and proposed algorithms.

\item We provide experimental evaluations of the proposed algorithms using the evaluation metrics such as computation time and accuracy under varying parameters where the proposed STAGD+DRM proves substantially faster than Memo-AGD.
\end{itemize}

\textbf{Scope:} In this work, we leverage the space-time prism idea for computing an abnormal gap measure and designing the proposed algorithms. For modeling gaps, we did not include acceleration, as this data was not available in the dataset. A detailed interpretation of acceleration using kinetic prism is discussed in Appendix \ref{Kinetic Prism:acceleration}. We are not considering low-density or sparse regions for detecting abnormal gaps. Other factors, such as device malfunction, hardware failure, and signal strength due to external factors (e.g., severe weather conditions), fall outside the scope of this work. In addition, we do not model anomalous trajectories without gaps. The proposed framework has multiple phases (i.e., filter, refinement, and calibration), but we limit this work to the filter phase. The refinement phase requires input from a human analyst and is not addressed here. Calibration of the cost model parameters also falls outside the scope of this work.

\textbf{Organization:} The paper is organized as follows: Section \ref{section:problemformulation} introduces basic concepts, the general framework, and the problem statement. Section \ref{section:baseline} provides an overview of the baseline approach AGD and Memo-AGD \cite{sharma2022abnormal}. Section \ref{section:SpaceTimeAware} describes the proposed Space-Time Aware Gap Detection (STAGD) and Dynamic Region Merge (DRM) algorithms, respectively, along with their execution traces. Theoretical analysis of both algorithms is provided in Section \ref{section:TheoreticalEvaluation}. Experiment design and results are reported in Section \ref{section:validation}. Section \ref{section:Discussion} discusses some of the discussions of our study. A study of related work is provided in Section \ref{section:Related_work}. Finally, Section \ref{section:Conclusion_future_work} provides the conclusion and future work.\\

\section{Abnormal Trajectory Gap Detection Problem}
\label{section:problemformulation}
In this section, first, we define key concepts and describe our general framework. Then, we formally define the problem, followed by some brief remarks about other ways the problem could be formulated. Finally, we present our problem formulation.


\subsection{Basic Concept}
\label{subsection:basicconcept}

\begin{definition}
A \textbf{study area} is a two-dimensional rectangular area in which the geo-locations of the input data are represented with latitude and longitude coordinate systems.
\end{definition}

\begin{definition}
A \textbf{spatial trajectory} is a trace of chronologically sorted GPS points in a series generated by a moving object in a geographic space.
\end{definition}
For example, $p_1$ → $p_2$ → ··· → $p_n$ represents a spatial trajectory, $p_{i} = (x_{i},y_{i},t_{i})$ where each point ($p_i$) is associated with a geographic coordinate set $(x_{i}, y_{i})$ and a time stamp $(t_{i})$.  Figure 1 shows the chronological sequence of GPS points (in grey), each associated with gap $G_{i}$.

\begin{definition}
An \textbf{object maximum speed ($S_{max}$)} is the maximum speed an object can attain based on the domain knowledge. The variable $S_{max}$ can be identified from publicly available maritime vessel databases \cite{aisdataUS}. For vehicles, humans, or animals, we can use the maximum physically allowed speed.
\end{definition}

\begin{definition}
An \textbf{effective missing period (EMP)} is a time period when a GPS signal is missing for longer than a certain threshold (e.g., 30 mins) which is externally specified by the end-user. Figure \ref{fig:problemstatement} shows that the EMPs for gaps $G_{1}$,..,$G_{n}$ are between the foci of the ellipses with greater than a certain interval (e.g., 30 mins, etc).
\end{definition}

\begin{definition}
A \textbf{signal coverage map (SCM)} is defined as a discretized grid space where each grid cell $GC_{i}$ represents the historically reported location traces $p_{i}$ generated by a set of trajectories.
\end{definition}

The process initiates by dividing the study area into a grid structure for a specific geographic region, ensuring global coverage of location traces. Initially, we calculate the maximum and minimum latitudes and longitudes to establish the minimum orthogonal bounding rectangle (MOBR). Subsequently, we evenly subdivide the MOBR using both latitude and longitude to construct grid cells and each grid cell is populated by the historic location-traces for a \textbf{fixed} time-frame. We finally classify each cell as reported (grey) or not reported (white) based on certain threshold $\theta$.

The signal coverage map classifies each grid cell denoted as $GC_{i}$ (e.g., $GC_{1}$, $GC_{2}$,..., $GC_{n}$) as reported or not. A reported cell ($GC_{m}$) is a cell whose total number of location traces is greater than a certain threshold $\theta$ (i.e., $\sum_{i=1}^{n} P_{i} \geq \theta$). In Figure \ref{fig:problemstatement}, the grid cells in \textit{grey} are reported cells or cells which are frequently reported by end-user, whereas the not reported cells are cells, where reported ship traffic is low or signal strength is under capacity (i.e., low density or sparse regions), are shown in \textit{white}. Hence, the signal coverage map is a boolean representation of discretized grid $\forall$ $GC_{i} \in \lbrack0,1\rbrack$.  It is important to note that the accuracy of the proposed algorithm significantly depends on the selection of the spatial region and the time frame. (more in Section \ref{section:Discussion}).

\begin{definition}
A \textbf{trajectory gap ($G_{i}$)} is defined as a space-time interpolated region within a missing location signal time period between two consecutive points.
\end{definition}
In Figure \ref{fig:problemstatement}, $G_{1}$,$G_{2}$,$G_{3}$...$G_{10}$ represent trajectory gaps that have been estimated via space-time interpolation in the form of geo-ellipses based on the properties of a space-time prism model \cite{miller1991modelling}. A space-time prism can be projected onto an $x-y$ plane \cite{pfoser1999capturing,miller1991modelling} in the form of a geo-ellipse which spatially delimits the extent of a moving object's mobility given a maximum speed during a missing signal period. Figure \ref{subfig:STPrism} shows a geo-ellipse representation with start point $P$ at ($x_1$,$y_1$) at time $t_1$ and end point $Q$ at ($x_2$,$y_2$) at time $t_2$, where $t_1 < t_2$ with focii $(x_1,y_1)$ and $(x_2,y_2)$.

\begin{figure}[htb]
    \centering 
\begin{subfigure}{0.27\textwidth}
  \includegraphics[width=0.8\linewidth]{./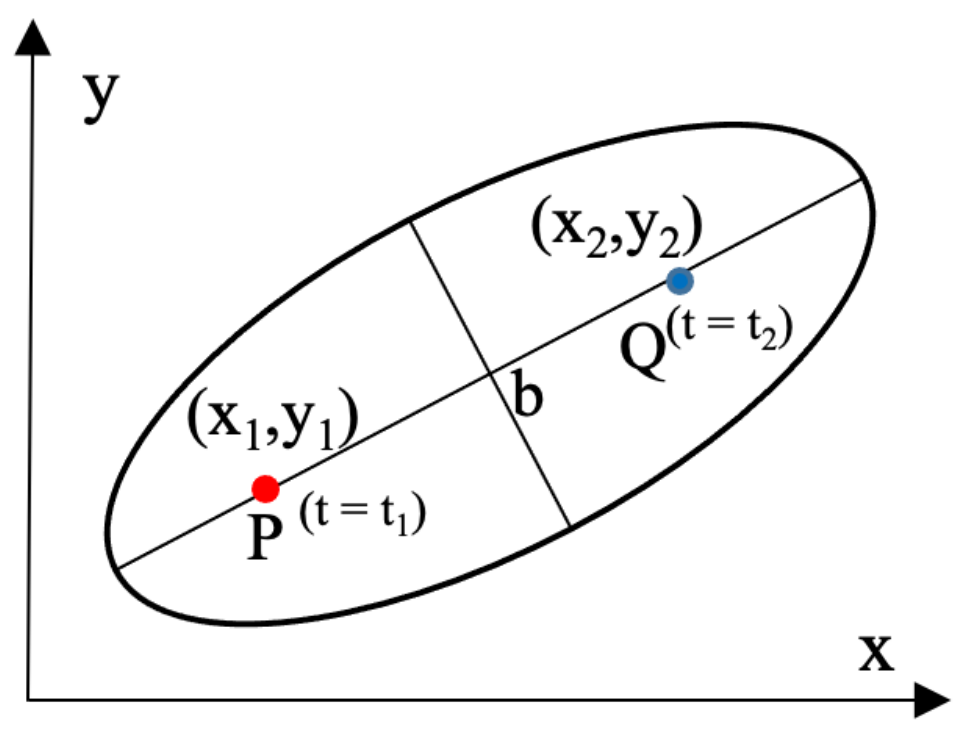}
  \centering
  \captionsetup{justification=centering}
  \caption{Projected Geo-Ellipse from Space-Time Prism}
  \label{subfig:STPrism}
\end{subfigure}\hfil 
\begin{subfigure}{0.27\textwidth}
  \includegraphics[width=0.9\linewidth]{./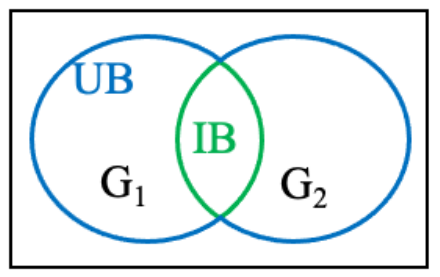}
  \captionsetup{justification=centering}
  \caption{IB (green) and UB (blue) of gaps $G_1$ and $G_2$.}
  \label{subfig:IntersectionUnion}
\end{subfigure}\hfil 
\centering
\captionsetup{justification=centering}
\label{Illstrative}
\caption{Example of (a) a geo-ellipse and (b) an intersection (IB) and union boundary (UB) (Best in Color)}
\end{figure}

The minimum orthogonal bounding rectangle (MOBR) is constructed over this ellipse, particularly focusing on the endpoints of its major axis by calculating the angle of this axis relative to a fixed coordinate system. This step is followed by rotating the ellipse to align this axis with one of the coordinate axes and then identifying the maximum and minimum extents of the ellipse in the direction perpendicular to the major axis. These extents, combined with the major axis endpoints, establish the MOBR's boundaries. Employing geometric transformations and the inherent properties of an ellipse, this approach is an effective means of determining the MOBR.

The R*-tree hierarchically organizes MOBRs within bounding boxes that cover child nodes allowing addition or removal of MOBRs without needing to restructure the entire tree. Through insertion, controlled splitting, and, when necessary, reinsertion, the tree maintains balance and enhances query performance. Incremental indexing of MOBRs in the R*-Tree, guided by a meticulous selection and splitting process, ensures efficient storage, access, and querying of spatial data, capitalizing on the R*-Tree's architectural advantages.

The temporal dimension, specifically the start and end points of a given period, is indexed separately within a large array, where each array index corresponds directly to a unique timestamp. This method ensures that every position in the array represents a distinct moment in time, allowing for efficient mapping and retrieval of temporal data. By aligning each index with a specific timestamp, this approach facilitates quick access to the associated start and end points, optimizing the process of querying temporal intervals within the dataset.

\begin{definition}
An \textbf{intersection and union boundary (IB and UB)} is the outline derived from a geo-ellipse region participating in intersection and/or union operations when two or more gaps have a spatial and temporal overlap. 
\end{definition}
For instance, in Figure \ref{subfig:IntersectionUnion}, two polygons, $G_{1}$ and $G_{2}$ undergo intersection (green outline) and union operations (blue outline). $G_{1}^{IB}$ or $G_{2}^{IB}$ is represented as Polygon($G_{1} \cap G_{2}$) whereas  $G_{1}^{UB}$ and $G_{2}^{UB}$is represented as Polygon($G_{1} \cup G_{2}$).

\begin{definition}
\label{def:AGM}
An \textbf{abnormal gap measure (AGM)} for a gap $G_{i}$ is the probability that a possible location of an object during a gap (unreported data time interval) has signal coverage. A higher value AGM indicates anomalous behavior since it means an object is not reporting its location despite having signal coverage.
\end{definition}
$GC_{int}$ denotes as the interpolated grid cells that reside within a spatial boundary (in the case of space time interpolation) or that intersect a  straight line (in the case of linear interpolation). The AGM computes the ratio of interpolated grid cells $GC_{int}$ intersecting with reported calls $GC_{m}$ (where $GC_{m}$ $\subseteq$ $GC_{int}$) to $GC_{int}$ . The formula is as follows:
\begin{equation}
\centering
   AGM = \frac{GC_{m} \cap GC_{int}}{GC_{int}}
\end{equation}

Figure \ref{subfig:LinearSpaceTimeComparison} (i) and (ii) show an example of computing AGM scores for linear and space-time interpolation, respectively. The linear interpolation captures only one $GC_{m}$ within its interpolated space $GC_{int}$ (i.e., a straight line). As a result, only 1 grey cell within 7 white cells is intersected by the straight line (i.e., $\frac{1}{7}$ or 0.14). By contrast, the space-time interpolation is able to capture 28 grey cells which intersect or are within its spatial boundary, resulting in $\frac{28}{32}$ or 0.80, and a better estimate than the linear interpolation. 

In this study, we confined our analysis to a static time frame spanning two years (2014-2016) as the computation of the abnormal gap measure is subject to significant variations over time. For example, employing a longer temporal range with a constant threshold, such as from 2009 to 2019, results in every cell being categorized as grey, in contrast to a more limited time frame, like from 2014 to 2016. This demonstrates that selecting an optimal threshold value is crucial for achieving a meaningful distribution of grey and white cells in the signal coverage map, which in turn facilitates the derivation of a significant AGM score within a trajectory gap.

\setlength{\belowcaptionskip}{-10pt}
\begin{figure}[htb]
    \centering 
\begin{subfigure}{0.48\textwidth}
  \includegraphics[width=\linewidth]{./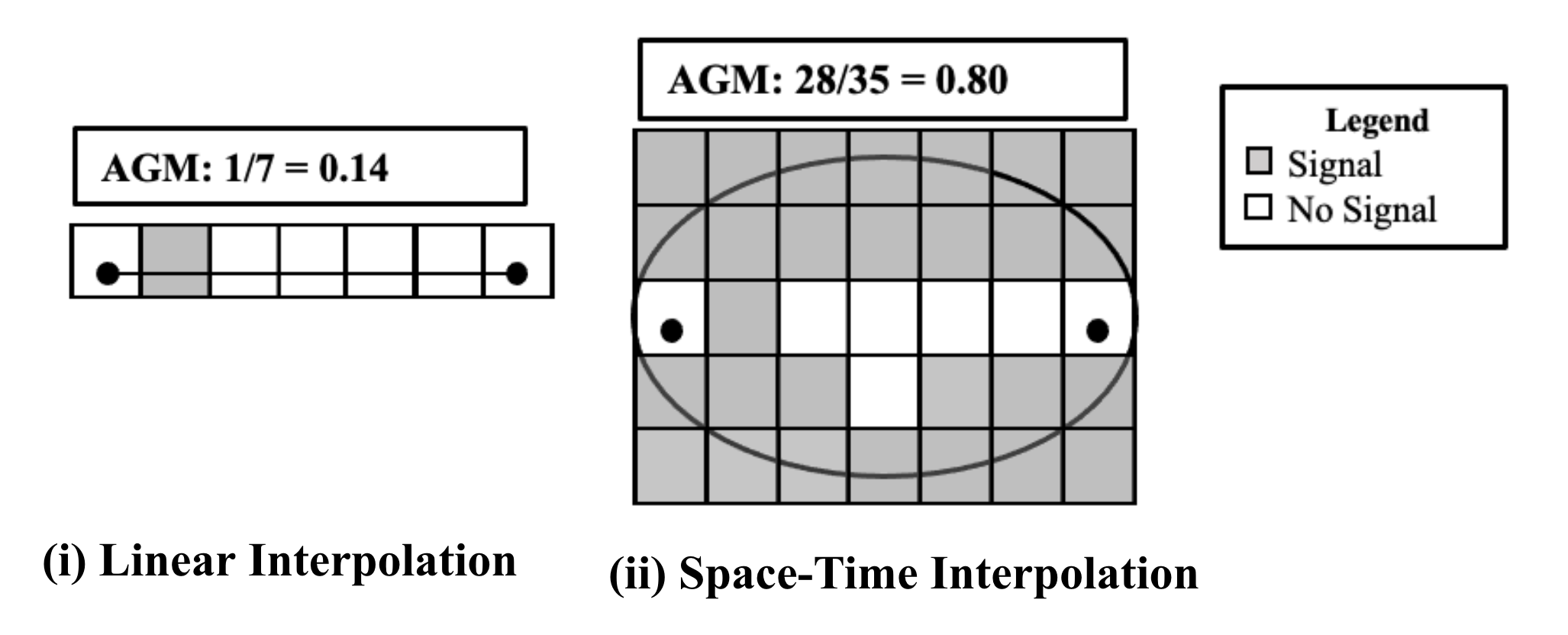}
  \centering
  \captionsetup{justification=centering}
  \caption{Abnormal gap measure}
  \label{subfig:LinearSpaceTimeComparison}
\end{subfigure}\hfil 
\begin{subfigure}{0.35\textwidth}
  \includegraphics[width=\linewidth]{./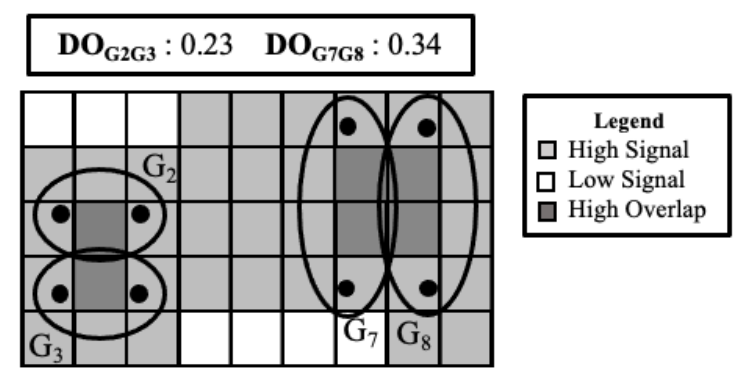}
  \captionsetup{justification=centering}
  \caption{Degree overlap threshold}
  \label{subfig:ParticipationIndex}
\end{subfigure}\hfil 
\centering
\captionsetup{justification=centering}
\label{fig:STPandIBUB}
\caption{Examples of abnormal gap measure (Left) and degree overlap threshold (Right)}
\end{figure}

\begin{definition}
\label{definition:PR}
A \textbf{degree overlap (DO)} is a measure to calculate the minimum of the ratio of interpolated grid cells $G_{i}$ that intersect with a signal coverage map within an intersection boundary (IB) of $G_{i}$ and $G_{j}$. It provides a measure to calculate the degree of participation for each gap $G_{i}$ within a gap set ($G_{1}$,$G_{2}$),  ($G_{3}$,$G_{4}$), and so on. The formula is as follows:
\end{definition}
\begin{equation}
\label{Equation2}
DO = min\lbrack\frac{{(GC_{m}^{G_{i}} \cap GC_{int}^{G_{i}})} \cap ({GC_{m}^{G_{j}} \cap GC_{int}^{G_{j}}})}{GC_{int}^{G_{i}}},\frac{{(GC_{m}^{G_{i}} \cap GC_{int}^{G_{i}})} \cap ({GC_{m}^{G_{j}} \cap GC_{int}^{G_{j}}})}{GC_{int}^{G_{j}}}\rbrack
\end{equation}
In this paper, we denoted the DO threshold by \textbf{$\lambda$}. The motivation behind the threshold is to provide early-stage filtering of a set of smaller coalesced gaps (e.g., 30 mins) interacting with relatively larger gaps (e.g., 3 hrs). This avoids the need for large coalescing operations, which result in more effective pruning of grid cells while calculating AGM scores with the signal coverage map. In addition, the length of the gap is independent of pixel sizes but directly proportional to the number of pixels (e.g., longer gaps tend to have larger spatial coverage, resulting in a large number of pixels in geo-ellipses. In addition, a higher lambda value provides a stricter condition, resulting in fewer merged groups (more details in Section \ref{subsection:Problemstatement})

Figure \ref{subfig:ParticipationIndex} shows two gap pairs $G_{2}$, $G_{3}$ and $G_{7}$, $G_{8}$ intersecting with some degree of overlap. According to Equation \ref{Equation2}, the degree of overlap DO$_{G2,G3}$ and DO$_{G7,G8}$ is 0.23 (i.e., min$\lbrack\frac{2}{9},\frac{2}{9}\rbrack$) and 0.34 (i.e., min$\lbrack\frac{4}{12},\frac{4}{12}\rbrack$) respectively, which means each gap participates equally in each pair.

\subsection{Framework}
\label{subsection:FrameWork}

Our aim is to identify possible abnormal regions for a given set of trajectory gaps and signal coverage area through a three-phase \emph{Filter} and \emph {Refine} approach. This paper introduces an intermediate filter phase to compute abnormal gaps and reduce the number of overlapping grid cells when there are pairs of trajectory gaps (Figure \ref{subfig:Intermediate Output}). Merging gaps will reduce the number of a high number of overlap grid cells (dark grey in Figure \ref{subfig:Intermediate Output}) and allow us to compute AGM scores for a larger number of merged groups (i.e., \textit{maximal group}) rather than individual trajectory gaps. This reduces the redundant work of considering high overlap region associated with each gap and further computing AGM scores, which is a time-intensive operation. 

The trajectory signals are first preprocessed to filter out trajectory gaps associated with multi-attributes (e.g., speed) to perform space-time interpolation. Then we apply the baseline Memo-AGD and proposed STAGD+DRM algorithms to optimize the number of spatial interactions by merging gaps, avoiding the need for redundant computations and improving computational efficiency. The output is a summary of significant abnormal gaps (Figure \ref{subfig:Final Output}) which helps a human analyst to scan a comparatively minimal area for inspection. The  results can be further verified via satellite imagery to derive a possible hypothesis about the anomaly (Figure \ref{fig:framework}).

\begin{figure}[ht]
    \centering
    \includegraphics[width=0.9\textwidth]{./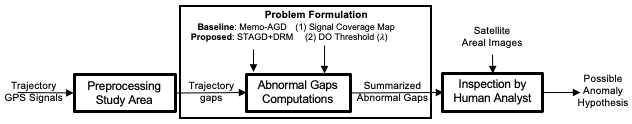}
    \caption{Framework for detecting possible abnormal gaps to reduce manual inspection by analyst\\}
    \label{fig:framework}
\end{figure}

\subsection{Problem Formulation}
\label{subsection:Problemstatement}

We formulated the problem to optimally identify an abnormal trajectory gap region in a spatiotemporal domain is formulated as follows:\\
\textbf{Input:}
\begin{enumerate}
	\item A study area $S$,
	\item A set of $|N|$ trajectory gaps
	\item A signal coverage map $SCP$
	\item A priority and a degree overlap threshold $\lambda$
\end{enumerate}
\textbf{Output:} Summarized abnormal trajectory gaps.\\
\textbf{Objective:} Solution Quality and Computational Efficiency\\
\textbf{Constraints:\\}
(1) Grid layout is not available.\\ (2) Acceleration is not available.\\ (3) Correctness and Completeness.

Figure~\ref{fig:problemstatement} (a) and (b) show the input as a two-dimensional representation of a signal coverage area and a set of trajectory gaps. Figure~\ref{fig:problemstatement} (c) and (d) show the significant summarized abnormal output after execution of the proposed algorithms.\\

\textbf{DO Threshold ($\lambda$) Interpretation:} The Degree Overlap (DO) threshold can be adjusted externally by a human analyst to yield more significant merged gap regions as shown in Figure \ref{fig:framework} where  serves as an input parameter for the proposed algorithm. Specifically, incrementing the value of imposes stricter criteria for merging gaps, allowing for more precision for human analysts to analyze individual gaps. The concept of Degree Overlap is founded on the principle of the minimum participation index between pairs of trajectory gaps or merged groups (collections of trajectory gaps). This metric provides insight into the potential proximity of two entities to engage in unauthorized activities in maritime domain (e.g., illegal oil transfers) despite adequate signal coverage. By adjusting the Degree Overlap threshold, analysts are afforded greater flexibility in scrutinizing each gap within merged groups. This includes the identification of significantly anomalous gaps within groups characterized by high anomaly scores. For example, a higher DO threshold  leads to a larger number of distinct abnormal gaps for verification by human analysts using satellite imagery, compared to lower threshold values. Nonetheless, verifying gaps through satellite imagery necessitates extensive post-processing work. Therefore, finding an optimal DO threshold is crucial to striking a balance between the quality of solutions and computational efficiency.

\subsection{The Choice of Problem Formulation}
\label{subsection:ChoiceProblemstatement}

The proposed abnormal gap measure (AGM) is based on identifying all possible occurrences of the vessels not reporting their location even though other moving vessels in the area did periodically. We $\textit{chose}$ to formulate the abnormal gap detection problem based on mandatory global maritime safety guidelines where ships must periodically report their geo-location to prevent potential collisions and illegal activities. The absence of location traces despite strong signal coverage makes it harder for authorities to disregard them and they are later sent to human analysts (Figure \ref{fig:framework}). 

However, the abnormality in ship behavior can be defined in other ways as well, such as monitoring the aggregated gaps based on the time of day or a ship’s physical and voyage-specific attributes. For instance, an increase in the total number of trajectory gaps at a specific time of day may alarm maritime authorities for specific vessel types (e.g., increased fishing efforts in night time is more suspicious as compared to day-time). Abnormality can also be defined with other physical attributes (e.g., headings, draft). For instance, draft, the depth of water that a ship needs in order to float, is inversely proportional to speed since loading of cargo reduces a ship’s speed but raises its draft and vice-versa. Another possibility is to study the distribution of a gap population to define more accurate probabilistic models such as Gaussian Processes to quantify potential locations within geo-ellipses further \cite{miller1991modelling}. However, each choice of problem formulation still lacks physical interoperability and requires ground-truth information, which later requires domain interpretation. More details are discussed in Section \ref{section:Discussion}.

\section{Baseline Approach}
\label{section:baseline}
Our previous work \cite{sharma2022abnormal} was based on a plane sweep approach for effective pruning of pairs of gaps in the filter phase and improving gap enumeration. The baseline method lays the groundwork for our proposed new work that will be explained later in Section \ref{section:SpaceTimeAware}. In this section, we provide a brief overview of the operations used in the baseline approach:


\textbf{Plane Sweep Approach:} We used a plane-sweep approach which is a filter and refine technique \cite{sharma2022analyzing} where the given study area is projected on a low-dimensional space (i.e., from 2D ellipses to 1D time-segments). In the filtering phase, all gaps are sorted based on x,y, or time coordinates to reduce storage and I/O cost. This allows the computation of intersections in a single pass. In the refinement phase, the AGD and Memo-AGD algorithm \cite{sharma2022abnormal} extracts gaps based on their spatiotemporal overlap. We used time as an initial filter phase to filter out gaps that are temporally synchronous. We perform a linear scan and check $t_{start}$ and $t_{end}$ for both gap pairs and check if $t_{start}$ and $t_{end}$ of the previous gap overlap with the $t_{start}$ and $t_{end}$ of the current gap.

\begin{figure}[ht]
    \centering
    \includegraphics[width=0.8\textwidth]{./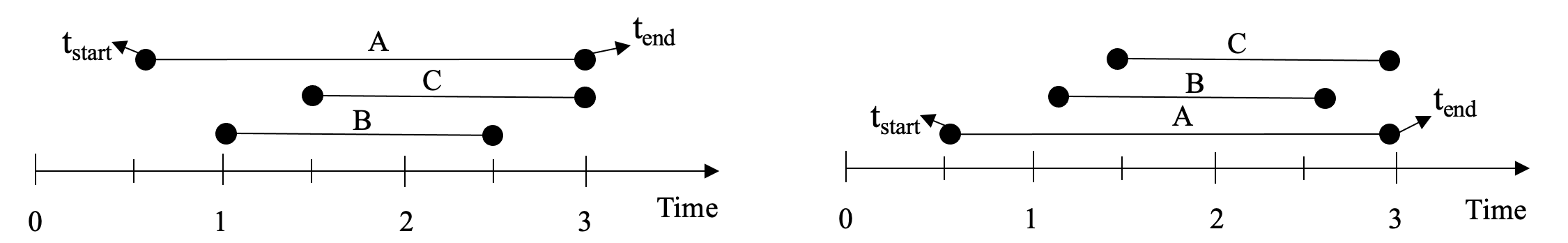}
    \caption{Plane Sweep Approach (Filter Step)\\}
    \label{fig:PlaneSweep}
\end{figure}

Figure \ref{fig:PlaneSweep} shows the initial filtering where we check $t_{start}$ and $t_{end}$ for gaps $A$, $B$, and $C$. Initial sorting prior to performing gap enumerations helps in considering linear scan via comparing $t_{start}$ and $t_{end}$. In the refinement step, imagine a plane sweeping through the start-time and end-time of each gap and check whether the start time of $B$ is smaller than the end time of $A$. If it is, then we consider A and B as potential candidates for the geo-ellipse spatial intersection.

\textbf{Trajectory Gap Enumeration:} After performing the plane sweep, the abnormal gap detection enumerates gaps by modeling every gap as geo-ellipse and applies an operation that checks which gaps intersect spatially, filtering out large gaps that don't meet the DO threshold $\lambda$. However, the operation is exorbitantly expensive because it considers all possible combinations of gap pairs. For instance, in Figure \ref{subfig:MemoAbnormalGapDetection}, since gaps $A$ and $B$ are intersected, gap $C$ also needs to satisfy the spatial intersection of $A$ and $B$ in order to be a part of the maximal group $A$, $B$ and $C$. Similarly, $D$ also need comparison with $A$, $B$, and $C$. This results in an exponential number of candidates for human analysts to post-process satellite imagery.


\setlength{\belowcaptionskip}{-10pt}
\begin{figure}[htb]
    \centering 
\begin{subfigure}{0.40\textwidth}
  \includegraphics[width=\linewidth]{./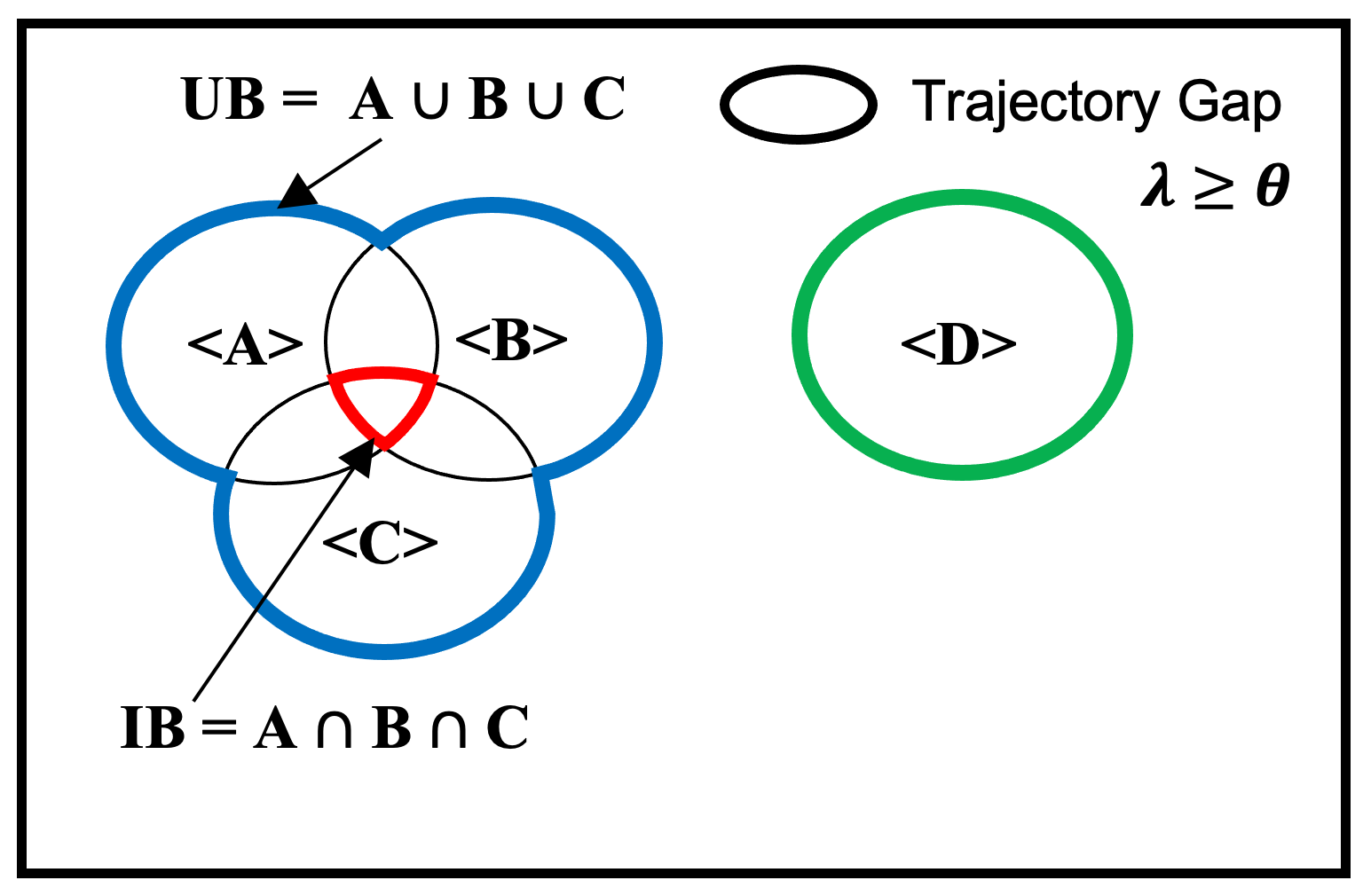}
  \centering
  \captionsetup{justification=centering}
  \caption{Memo-AGD Gap Enumeration}
  \label{subfig:MemoAbnormalGapDetection}
\end{subfigure}\hfil 
\begin{subfigure}{0.40\textwidth}
  \includegraphics[width=\linewidth]{./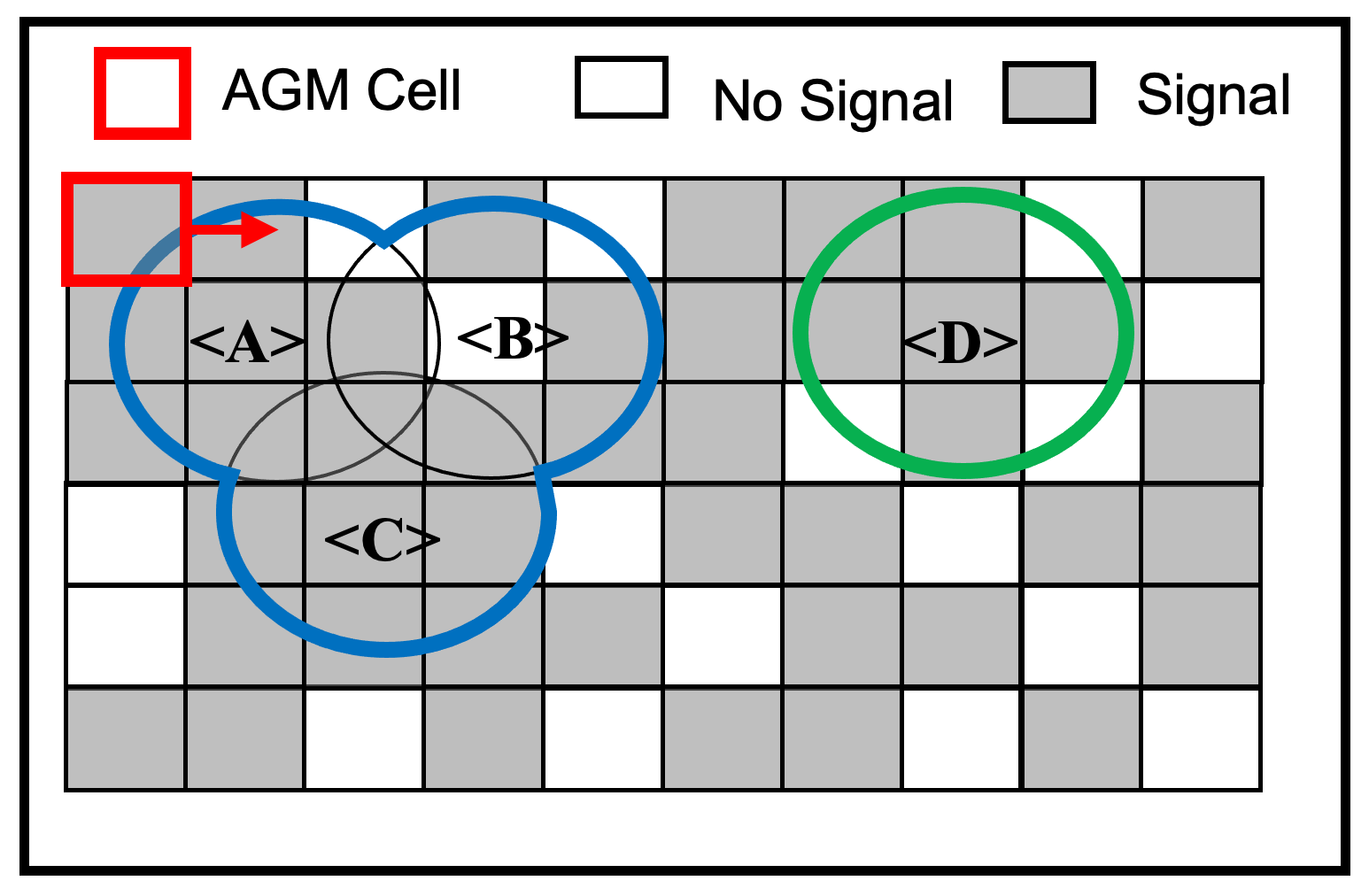}
  \captionsetup{justification=centering}
  \caption{AGM computations}
  \label{subfig:AbnormalGapMeasure}
\end{subfigure}\hfil 
\centering
\captionsetup{justification=centering}
\label{fig:AGDandMemoAGD}
\caption{Examples of (a) Memo-AGD Gap Enumeration and (b) Memo-AGD AGM computations}
\end{figure}

Memoized abnormal gap computation (Memo AGD) uses additional variables, namely, $G_{i}^{Obs}$ and $G_{i}^{LU}$, where $G_{i}^{Obs}$ keeps track of the total current elements in an Observed List and $G_{i}^{LU}$ provides a \textit{lookup table} which allow us to store information which was already involved in a prior intersection with $G_{i}$. \textit{Memoization} reduces the need to compute gaps unnecessarily. For instance, Polygon (A $\cap$ B $\cap$ C) in Figure \ref{subfig:MemoAbnormalGapDetection} can be cached so that computing gap $D$ does not require redundant computations of A, B, and, C resulting in an only comparison of Polygon ($A \cap B \cap C$). Appendix A provides a more detailed execution trace.


\textbf{Abnormal Gaps Measure (AGM) Computations:} After the gap enumeration, we merge all maximal subsets to reduce the total number of candidate pairs for reducing the redundant computation of cells in polygon operations in overlapping regions. The aim is to reduce the number of gap candidates while performing AGM computations which is typically a cell in polygon operation. For instance, Figure \ref{subfig:AbnormalGapMeasure} shows the gap <A>, <B>, and <C> are merged into Polygon ($A$ $\cup$ $B$ $\cup$ $C$), since $A \cap B \cap C$ $\neq$ 0. The merged regions then undergo \textit{cell-in} polygon operations to compute the abnormal gap measure (AGM) for Polygon ($A \cup B \cup C$). For instance, in Figure \ref{subfig:AbnormalGapMeasure}, AGM cells linearly perform a cell in polygon operation with Polygon ($A \cap B \cap C$) and Polygon($D$) to compute AGM scores of Polygon ($A \cap B \cap C$) as 0.55 and Polygon($D$) as 0.66.
\section{Proposed Approach}
\label{section:SpaceTimeAware}

Our baseline AGD and MemoAGD method performed certain computations (e.g., sorting, comparison), which can be further optimized by introducing a less-stricter merging condition to reduce redundant computations in real-time effectively. In addition, computing the AGM score is computationally expensive since it heavily depends on geographical grid computations and other factors (e.g., DO threshold) described in Section \ref{section:baseline}. Here, we introduce space-time aware gap detection (STAGD) to reduce redundant comparison operations and Dynamic Region Merge (DRM) to improve the efficiency of cells in polygon operation. In this section, we describe the proposed algorithm (STAGD+DRM) for enumerating gaps where each gap intersects in space-time and forms a merged cluster which is later merged dynamically to perform efficient AGM computations.

\subsection{Spatial-Time Aware Gap Detection (STAGD)}
\label{subsection:RTree}

First, we describe our conceptual understanding of the \textit{temporal} and \textit{spatial merge-aware indexing approach} with examples, which essentially handle the indexing of the ellipse by first handling the temporal indexing in the array list followed by hierarchical spatial indexing. After indexing, at the very end of the leaf node, we introduce the overlapping condition maximal union merge-based criterion along with its algorithm and execution trace. 

\subsubsection{Temporal Merge Aware Indexing}
\label{subsubsection:Time based indexing}

Traditional sorting algorithms (e.g., quicksort) have the interesting property of comparing and sorting elements in $\mathcal{O}(n\log{}n)$ time which proves to be computationally expensive, especially in case of large data volume. In this paper, we use \textit{comparison-less} sorting by assuming $n$ gap elements in the start time ($t_{start}$) range 1 to $k$ such that each gap can be indexed in an input array. We initiate our analysis by defining a global start time, denoted as $t_{start}$, and a global end time, denoted as $t_{end}$. These temporal bounds are subdivided into intervals of one second each, serving as indices within an array list. The value assigned to each index corresponds to $t_{start}$. Each time segment ($t_{start}$, $t_{end}$) is efficiently indexed via binary search, exhibiting O($nlogn$) time. Subsequently, we also maintain a monotonic stack with ($t_{start}$, $t_{end}$) elements sorted by $t_{start}$, efficiently comparing the $t_{end}$ of the current gap and the $t_{start}$ of the previous gap.

For instance, in Figure \ref{fig:TimeIndex}, $<A>$ to $<H>$ can be indexed based on $t_{start}$ where we can get a sorted order of trajectory gaps <C>,<B>,<A>,<D>,<E>, <F>, <H>, and <G>.  We can now perform a plane sweep to an already sorted list and compare the end-time and start-time of the gap, similar to the traditional Memo-AGD \cite{sharma2022abnormal}. Although this operation still requires quadratic computations, it will more efficiently pre-process time. 

\begin{figure}[ht]
    \centering
    \includegraphics[width=0.8\textwidth]{./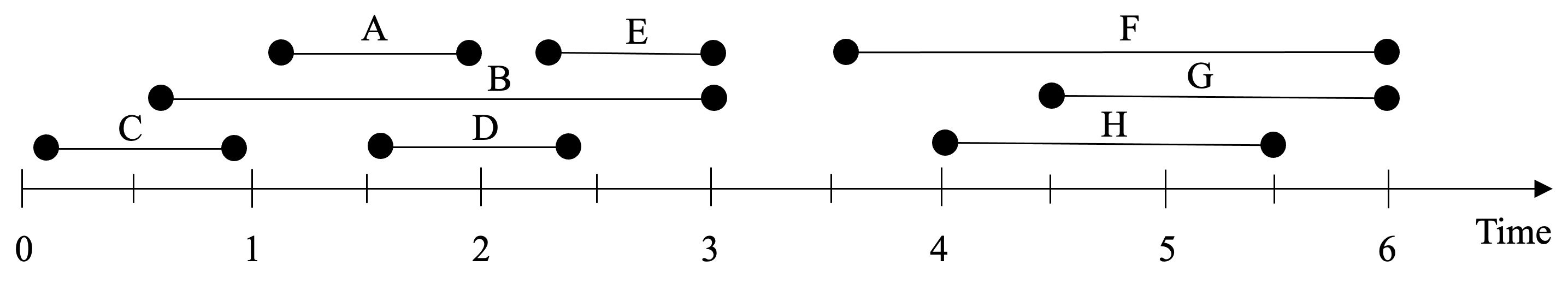}
    \caption{Temporal-Merge Aware Indexing of trajectory gaps\\}
    \label{fig:TimeIndex}
\end{figure}

\subsubsection{Spatial Merge Aware Indexing}
\label{subsubsection:Time based indexing}
A spatial index organizes access to data so that spatial objects can be found quickly without searching in a given spatial partition. It also enables the indexing of multidimensional objects, which can drastically speed up GIS operations like intersections and joins. For instance, R-Trees organize data in a tree-shaped structure, with a minimum orthogonal bounding rectangle (MOBR) at the nodes. MOBRs indicate the farthest extent of the data and can be indexed easily but they can lead to "false positives" (i.e., MOBRs may intersect when the precise geometries (geo-ellipses) does not actually intersect). In this paper, we leverage this space-time access method to efficiently compare gaps that are further spatially proximate effectively.

\textbf{Index Generation: } We first calculate the minimum and maximum values for xmin, ymin, xmax, and ymax across all MBRs to create a global rectangle that encompasses all individual MBRs. This effectively sets the stage for a hierarchical organization in which each node represents a spatial subdivision, thereby enhancing query performance through spatial locality. We have found that operations such as node splitting, bounding box updates, and overflow handling are better managed by estimating the overall spatial extent of the study area. Moreover, this approach efficiently handles a wide range of queries, including point queries, range queries, and nearest neighbor searches, by exploiting the spatial hierarchy and locality inherent in the global rectangle and its subdivisions.

\textbf{Insertion:}After checking for temporal overlaps, we check whether the geo-ellipse of two objects intersects or not. However, certain geo-ellipses are not involved in any groups (e.g., <F>, <G>, and <H> in Figure \ref{fig:SpaceIndex}), resulting in high computational cost. For instance, in Figure \ref{fig:SpaceIndex}, <A>, <B>, and <C> intersect in space and time, but <F>, <G>, and <H> are spatially distant. Hence, future gaps which are closer to <F>, <G>, and <H> but distant from <A>, <B>, and <C> will only perform intersections within their defined spatial partition (i.e., $L_{1}$). Hence, indexing such isolated gaps using hierarchical data structures may reduce quadratic comparisons.

\begin{figure}[ht]
    \centering
    \includegraphics[width=0.8\textwidth]{./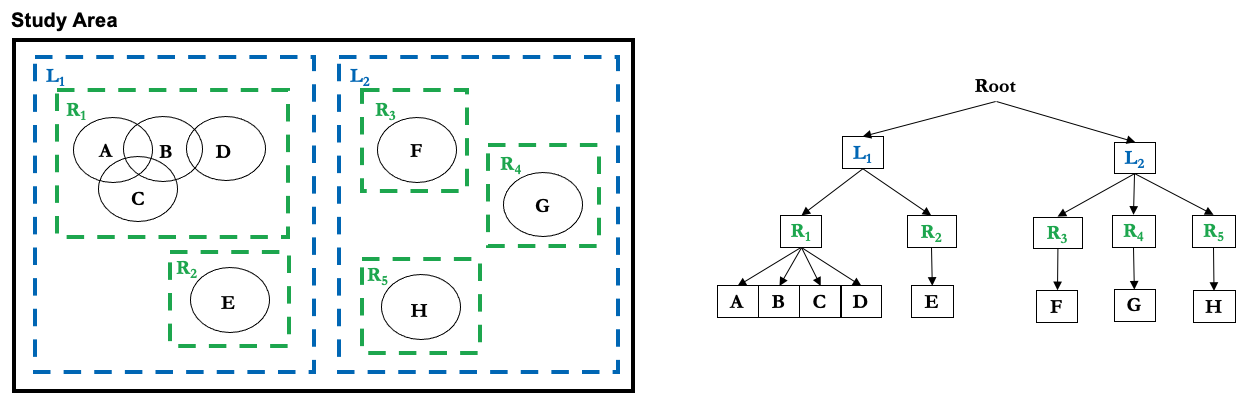}
    \caption{Spatial Merge Aware Indexing of trajectory gaps\\}
    \label{fig:SpaceIndex}
\end{figure}

Figure \ref{fig:SpaceIndex} shows gaps <A> to <H> spatially indexed at various hierarchical levels $L_{1}$ and $L_{2}$ where gaps <A> to <E> are indexed in $L_{1}$ and <F> to <H>, which are isolated and indexed in $L_{2}$. Suppose gap <H> needs to satisfy the criteria of spatiotemporal joins, which can be avoided by the comparison of the maximal subset of $A \cap B \cap C$ and $D \cap E$ by indexing at $L_{2}$. Then <H> can only be compared with <F> and <G> in addition to logarithmic levels and reducing the comparison operations from quadratic to logarithmic.

In addition, maximal intersection criteria result in multiple isolated (or independent) gaps, which are costly to compare and compute abnormal gap measures for future gap intersection operations. For instance, in Figure 11, <D> is partially intersecting with the maximal set <A,B,C>, then it can also be included in the maximal set to reduce isolated gaps as discussed in the following section.

\subsubsection{Maximal Union Merge-based Criteria}
\label{subsubsection:Time based indexing}

We first index the gaps and then determine if they intersect with the largest merged unit (maximal union) within the merged group. If they do, they are then merged with the group, but if not, they are considered isolated (or independent) gaps. This leads to a large number of isolated gaps, even in denser regions, which makes it difficult to perform spatial comparison operations and AGM computations. In addition, if all gaps are stored in a single index, leading toward high space and time complexity. This can negatively impact spatial or spatiotemporal indexing methods (e.g., 3D R-Trees). To overcome this issue, we first perform spatiotemporal indexing followed by proposing a \textit{maximal union merge-based criterion} to form maximum contiguous merged regions by using the union and intersection properties of gaps while taking into account the DO threshold value. More details in Algorithm \ref{STAGD}.

\begin{algorithm}
\caption{Space-Time Aware Indexing Algorithm}
\label{STAGD}
\footnotesize
\begin{flushleft}
\hspace*{\algorithmicindent}\textbf{Input:}\\
\hspace*{\algorithmicindent} A Study Area $S$ and A Set of Gaps \lbrack$G_{1}$,...,$G_{n}$\rbrack\\
\hspace*{\algorithmicindent} An Estimate Signal Coverage Area $MP$\\
\hspace*{\algorithmicindent} Participation Index $\leftarrow \lambda$ and Significant Score $\leftarrow K$\\
\hspace*{\algorithmicindent}\textbf{Output:}\\
\hspace*{\algorithmicindent} Abnormal Gaps with Top $K$ AGM\lbrack$G_{1}$,..,$G_{k}$\rbrack
\footnotesize
\end{flushleft}
\begin{algorithmic}[1]
    \Procedure{:}{}
    \State \textbf{Step 1:} Initialize AGM list $\leftarrow \emptyset$, Polygon($G_{i}$), and Non Observed List $\leftarrow \emptyset$
    \State Temporal Index List $\leftarrow \lbrack min(t_{start}),max(t_{end}) \rbrack$ $\forall$ gaps $\in$ $G_{i}$
    \State \textbf{Step 2:} Perform temporal indexing and check overlap $\forall$ $\lbrack t_{start},t_{end} \rbrack$ $\in$ $G_{i}^{EMP}$
    \State Index $t_{start} \in $ $G_{i}^{EMP}$ in Temporal Index List
    \State Insert $G_{i}$ in Non Observed List to preserve \textit{monotonic property}
    \ForEach{$G_{i}$ $\in$ Non Observed List}
        \State $G_{j}$ $\leftarrow$ topmost element from local observed list
        \State Local Observed List $\leftarrow$ Global Observed List
        \While {Local Observed List $\neq \emptyset$}
            \If{$G_{j}^{t_{end}}$ $\leq G_{i}^{t_{start}}$ where $\lbrack t_{start},t_{end} \rbrack$ $\in$ both $G_{i}$ and $G_{j}$}
                \State Add $G_{j}$ in potential candidate gap pairs $\in$ $G_{i}^{CP}$
            \EndIf
            \State Remove $G_{i}^{CP}$ from Local Observed List
        \EndWhile
        Add $G_{i}$ to Global Observed List
        \State \textbf{Step 3:} \textbf{Perform Spatial Indexing and Check Maximal Union Criteria}
        \State Perform Hierarchical Spatial Indexing
        \ForEach{$G_{j}$ $\in$ candidate pairs of $G_{i}$ $\in$ $G_{i}^{CP}$}
            \If{Polygon($G_{i}$) $\cap$ Union and Intersection Group of $G_{j}$ and DO $\geq \lambda$}
                \State Merge $G_{i}$ over union group $\in$ $G_{j}$ and update both $G_{i}^{IB}$ and $G_{j}^{UB}$
                \State Terminate and skip towards next iteration
            \EndIf
            \If{Polygon($G_{i}$) $\cap$ Union Group of $G_{j}$ and DO $\geq \lambda$}
                \State Merge $G_{i}$ over Union group $\in$ $G_{j}$ and update $G_{j}^{UB}$
            \EndIf
        \EndFor
    \EndFor
    \State \textbf{Step 4:} \textbf{Compute AGM Score $\forall$ $G_{i}$ in Observed List}
        \ForEach{$G_{j}$ $\in$ Observed List}
            \If{$G_{j}^{count}$ = Number of gaps in the subset <$G_{j}$>}
                \State AGM List $\leftarrow$ $AGM(MP \cap G_{j})$ 
            \EndIf
        \EndFor
    \State \textbf{Step 5:} \textbf{Return K Significant Abnormal Gaps}
    \State \Return Top K gaps \lbrack$G_{1}$,..,$G_{k}$\rbrack based on AGM Score
\EndProcedure
\end{algorithmic}
\end{algorithm}

\textbf{Step 1:} First we initialize the AGM list and Non-Observed List to model every gap $G_{i}$ as a geo-ellipse (i.e., Polygon($G_{i}$)) along with an EMP (Effective Missing Period) based on $G_{i}^{EMP}$ = \lbrack ${G_{i}^{t_{start}}}$,${G_{i}^{t_{end}}}$\rbrack. We further initialize the Temporal index list $\in$ [min(${t_{start}}$), max(${t_{end}}$)] as a filtering step for performing the linear scan, which results in avoiding comparison-less operations.

\textbf{Step 2:} For each  ${G_{i}^{EMP}}$ $\in$ $\lbrack t_{start},t_{end} \rbrack$, we leverage $t_{start}$ to index the gap's position in the temporal index list. We then perform a linear scan to gather all the gaps $G_{i}$ and save them in a Non-observed list to preserve \textit{montonicity} based on the increasing order of the time-stamp. For each $G_{i}$ in the Non-Observed list, we compare the DO threshold ($\lambda$) with the topmost gap in the Local Observed List and if the condition meets the threshold, then we add the pair to a list of candidate pairs of $G_{i}^{CP}$ $\in$ [$G_{i}$,$G_{j}$,...,$G_{n}$]. These candidate pairs in $G_{i}^{CP}$ are then spatially indexed using widely used indexing (e.g., R-Trees) or any other hierarchical spatial partitioning method.

\textbf{Step 3:} Next, we perform spatial intersections of geo-ellipse based on the \textit{maximal union merge criterion} on the existing groups for gaps that reside within the leaf node and satisfy DO threshold condition and within the same partition at the leaf node level of the hierarchical spatial index. For each $G_{i}$, we then compare each gap $G_{j}$ in $G_{i}^{CP}$ (candidate pairs) and check the \textit{maximal union merge criterion} based on the \textit{Intersection} and \textit{Union} group. The criterion states that if gap $G$ is not intersecting with intersection bounds ($IB$) but is intersecting with the union bound (UB) of the maximal group, we then merge the gap pair with the union of the maximal group. For instance, the pseudo-code related to lines 17-22 shows the maximal union criterion where we first check if each gap $G_{i}$ is intersecting with both Union and Intersection of the maximal group. If it is, we first merge, then terminate and move on to the next candidate pairs, otherwise we further check if gap $G$ is intersecting with the Union of the maximal group. If it is, we merge all participating gaps and create MOBRs for effective AGM computations. This provides a stricter condition to make maximal groups and lower the count of isolated gaps.

\textbf{Step 4:} For each gap $G_{i}$ $\in$ maximal union group, we perform AGM computation via linear scan operation of the maximal union group (e.g., $A \cup B \cup C$), intersection group (e.g., $A \cap B \cap C$), and isolated gaps (e.f., $A$) intersection with the signal coverage map SCM and subsequently save all merged gaps in the Observed List.

\textbf{Step 5:} Extract abnormal gaps based on the AGM Score.

\textbf{Execution Trace:} Figure \ref{fig:Step1-2} shows the execution trace of steps 1-2 of Algorithm \ref{STAGD}. Step 1 initializes gaps <A>, <B>, <C>, <D>, and <E> with a pre-defined EMP $\geq$ $\theta$ along with additional \textit{apriori} variables as defined in Memo-AGD \cite{sharma2022abnormal}, namely $G^{CP}$ $G^{UB}$,$G^{IB}$ and  Non-Observed List is initialized as $\emptyset$ and Temporal Index List $\in$ [min($t_{start}$),max($t_{end}$)] of total gaps as shown in Figure \ref{fig:Step1-2} (e.g., [$t_{start}$ $\in$ A,$t_{end}$ $\in$ E]). For \textit{comparison-less} sorting, step 2 performs a linear scan towards the $t_{start}$ of each gap in Temporal Index List for populating the Non-Observed List as shown in Figure \ref{fig:Step1-2}. Finally, we check if the temporal overlap is satisfied by comparing the previous gap start point ($t_{start}$) to that of the current gap's endpoint ($t_{end}$) using a temporary data structure. In Figure \ref{fig:Step1-2}, we used \textit{stack} to compare prior elements with current elements from the Local Observed List and \textit{cache} them in $G^{CP}_{i}$ if the temporal condition is satisfied. For instance at $i$ = 1, $t_{end}$ $\in$ <B> is compared to $t_{start}$ $\in$ <A> in stack and <A> is stored in $G^{CP}$ $\in$ <B> (resulting in $G_{i}^{CP}$ $\in$ <A,B> and $G_{j}^{CP}$ $\in$ <A,B>) since it satisfies temporal overlap condition. Similarly, for each gap, we store $G^{CP}$ for the spatial comparison at the leaf level of the spatial index.



\begin{figure}[ht]
    \centering
    \includegraphics[width=1.0\textwidth]{./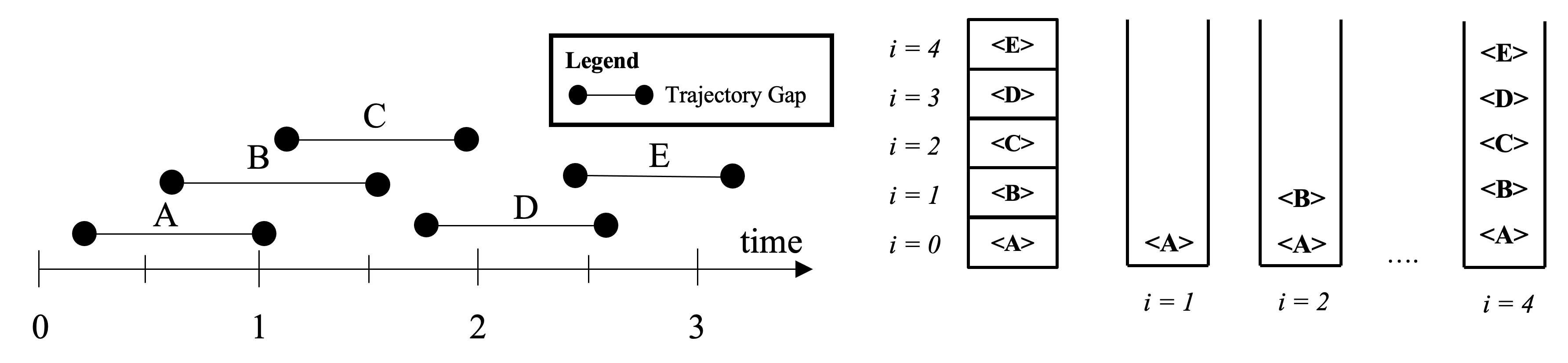}
    \caption{Spatial Merge-Aware with Maximal Union Merge Criteria Execution Trace (Step 1-2)\\}
    \label{fig:Step1-2}
\end{figure}

Figure \ref{fig:Step3} illustrates the execution trace of the maximal union merge criterion in Algorithm \ref{STAGD}. After hierarchical indexing as explained in Figure \ref{fig:SpaceIndex}, we focus on gaps <A> to <E> which are residing in the same index with candidate pairs $G_{i}^{CP}$ and then intersected with $IB$ and $UB$. For instance, gaps <B> and <C> satisfy $IB$ with $\lambda$ $\geq$ $\theta$. However, the $G_{i}^{CP}$ of <D> only intersects with UB of <B> (i.e., <$A \cup B \cup C$>). If <D> $\cap$ $A \cup B \cup C$ is greater than DO threshold $\lambda$, then it will still be merged to A $\cup$ B $\cup$ C $\cup$ D. Similarly if  $G_{i}^{CP}$ of <E> does not spatially intersect with <D> and the algorithm terminates.

\begin{figure}[ht]
    \centering
    \includegraphics[width=1.0\textwidth]{./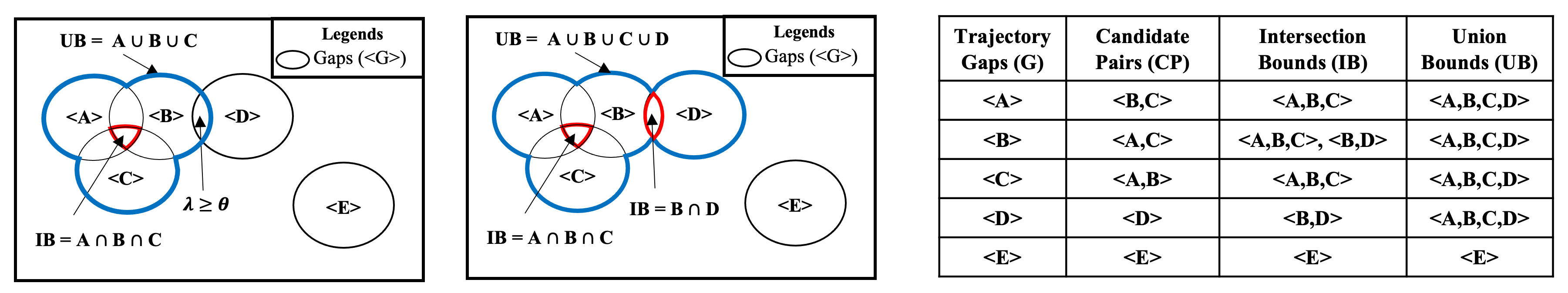}
    \caption{Spatial merge-aware with maximal union merge criteria execution trace (Step 3)\\}
    \label{fig:Step3}
\end{figure}

\subsection{Dynamic Region Merge (DRM) Approach}
\label{subsection:DRM}
Our method of caching temporal information and spatial indexing effectively reduces comparison operations prior to performing AGM computations. In addition, maximal union merge criteria reduce the total number of merged gap unions to aid cells in polygon interactions for the AGM computations. However, the maximal union criterion relies on a DO threshold and the nature of the data distribution, i.e., whether the distribution of gaps is uniform or skewed. For instance, if a gap distribution is skewed and no gaps are involved in any spatiotemporal intersection operations, then all gaps will reside in a single index. This results in a large number of isolated gaps which also increases cell in polygon operations as shown in Figure \ref{subfig:AbnormalGapMeasure}. Hence, we propose more optimal two methods for finding significant AGM scores based on maximal groups considering the sliding window approach and early termination criterion.

\subsubsection{Sliding Window with Memoization Approach}
\label{subsubsection:SlidingWindow}

Given multiple gaps intersecting within a spatial partition, we compute AGM scores incrementally whenever a new gap intersects a maximal group spatially. In this technique, we cache the previously computed AGM cells for each maximal group defined by the maximal union merge criterion, which permits intersection and union criteria thresholds. Hence, we only need to append the reported signal coverage cells associated with the new gap and avoid the need to recompute the AGM scores of the entire maximal subgroup. The sliding window technique can achieve this, along with caching current cells involved with the group in a linear time (similar to the linear scan) but caching ensures fewer cells in polygon comparisons. An example of the approach is illustrated in Figure \ref{fig:Sliding} as follows:

\begin{figure}[ht]
    \centering
    \includegraphics[width=0.95\textwidth]{./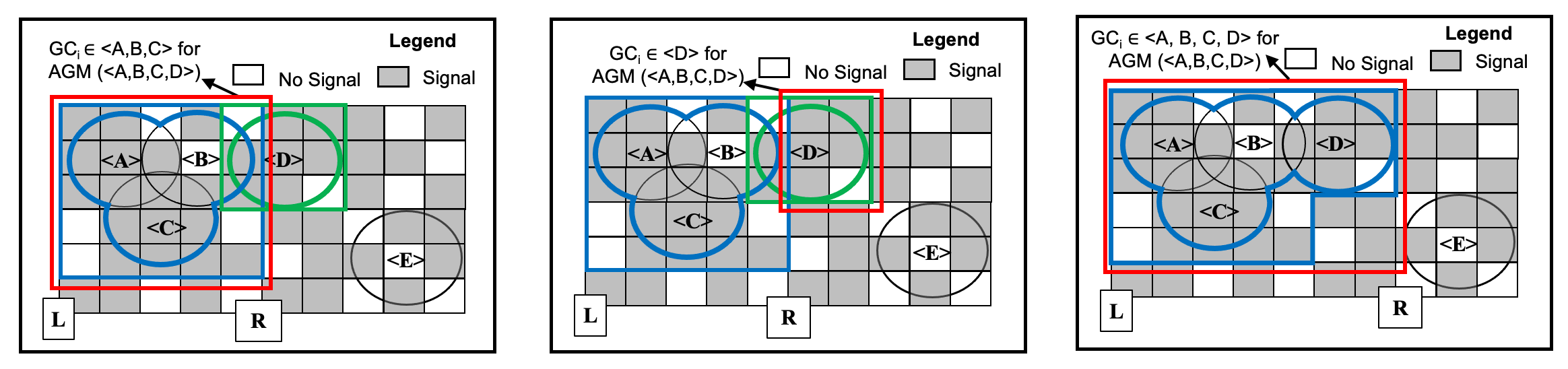}
    \caption{Proposed sliding window approach to compute AGM incrementally\\}
    \label{fig:Sliding}
\end{figure}

Figure \ref{fig:Sliding} shows an illustration of a sliding window with caching where gaps <A>, <B>, and <C> are involved in maximal subsets and we further initialize left (L) and right (R) pointer around the MOBR defined around Polygon( <A,B,C>). AGM score is based on the cells involved in the MOBR defined around them. For instance, if gap <D> intersects the $A \cap B \cap C$ above the given DO threshold ($\lambda$), we can merge gap <$D$> with the <$A \cup B \cup C$> by itself based on \textit{maximal union merge-based criteria}, then we are required to recompute scores for <$A \cup B \cup C$> (leftmost Figure \ref{fig:Sliding}). To eliminate this redundancy, we cache such cells $\in$ <$A,B,C$> and append only $GC_{i}$ $\in$ <$D$> with $GC_{i}$ $\in$ <$A,B,C$>.

\textbf{Incremental Spatial Indexing:} The minimum orthogonal bounding rectangle (MOBR) of the group <A, B, C> is defined by the coordinates ($x_{1}, y_{1}$), ($x_{2}, y_{1}$), ($x_{1}, y_{2}$), and ($x_{2}, y_{2}$), with the left pointer (L) encompassed within the range [($x_{1}, y_{1}$), ($x_{1}, y_{2}$)] and the right pointer (R) within [($x_{2}, y_{1}$), ($x_{2}, y_{2}$)]. Upon encountering an intersecting MOBR of <D>, we perform union of <A, B, C> and <D> and updating the smallest ($x_{min}$,$y_{min}$) and largest ($x_{max}$,$y_{max}$) coordinates among the two MOBRs resulting with <A,B,C,D>. In case of union, we update the right pointer (R) to a new [($x_{2}^{new}, y_{1}^{new}$), ($x_{2}^{new}, y_{2}^{new}$)], else we update the Left pointer (L) [($x_{1}^{new}, y_{1}^{new}$), ($x_{1}^{new}, y_{2}^{new}$)]. We index grid cells situated between the left and right pointers and update the AGM score for the merged group.

\subsubsection{Early Termination Criteria:}
\label{subsubsection:Caching}

Since the proposed criterion helps to reduce isolated gaps by providing less stricter conditions by merging with the maximal subgroups via union operations, it may also result in a long chain of gaps which may also lead to false positive or false negative results based on AGM criteria. For instance, if there is some variability in terms of signal coverage map (as shown in Figure \ref{fig:Exec_Trace_DRM} (b) and Figure \ref{fig:Exec_Trace_DRM} (c)) and a new gap (e.g., <$D$>) may fall below the defined threshold and the entire maximal subgroup may not be abnormal. To avoid such false negatives, we use an early termination criterion where we incrementally check both the previous and new AGM scores prior to satisfying the maximal union merge criteria as demonstrated in Equation \ref{earlytermination} below: 

\begin{equation}
\label{earlytermination}
    || AGM (<G_{i},..G_{k}>) - AGM(<G_{j}>) || \geq \delta
\end{equation}

where $\delta$ is the threshold for determining whether the difference between previous and new AGM scores is more significant or not. If it's not, we then we merge <$G_{i},..G_{k}$> and <$G_{j}$> and compute AGM (<$G_{i},..G_{k}$> $\cup$ <$G_{j}$>). Otherwise, we calculate the AGM of <$G_{i},..G_{k}$> and <$G_{j}$> separately i.e., AGM (<$G_{i}$,..$G_{k}$>) and AGM(<$G_{j}$>).

\subsubsection{AGM-based Decreasing Queue:} 
\label{subsubsection:Caching}

Given multiple merged regions in some leaf node, we can further leverage the $\textit{monotonic property}$ as mentioned while indexing the temporal gaps start-times which results in $\textit{comparison-less}$ sorting. Here we leverage a decreasing queue (deque) data structure, which maintains the non-increasing order of AGM scores of the merged-groups. The main intuition behind this is doing an on-the-fly comparison of the AGM score of incoming gap $G_{i}$ with the AGMs of gaps which are previously sorted in the decreasing order such that Equation \ref{earlytermination} terminates the comparison at an early stage. For instance, a gap $G_{i}$ with a high AGM score is likely to merge with groups having a high AGM score and vice-versa such that Equation \ref{earlytermination} is satisfied.

\begin{figure}[ht]
    \centering
    \includegraphics[width=0.80\textwidth]{./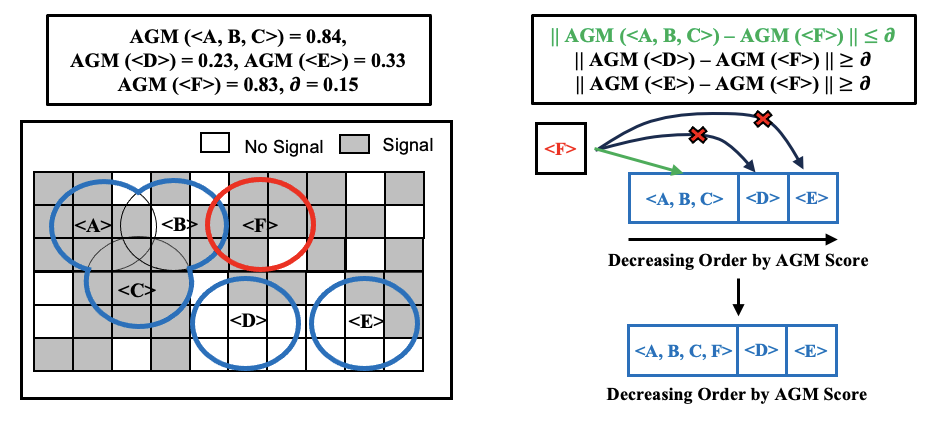}
    \caption{Proposed Decreasing Queue strategy to sort gaps in leaf node by decreasing order\\}
    \label{fig:Deque}
\end{figure}

Figure \ref{fig:Deque} illustrates the decreasing queue approach, where the AGM scores for merged gaps <A, B, C> for gap <D>, and for gap <E> are 0.84, 0.23, and 0.33 respectively. Consider <F> undergoing spatial comparison with the groups <A, B, C>, <D>, and <E>. With a specified threshold $\delta = 0.15$, it becomes evident that <F>, having an AGM of 0.83, is more likely to merge with the group <A, B, C> than with <D> or <E>, owing to the similarity in AGM values. Organizing all minimum orthogonal bounding rectangle (MOBR) coordinates (xmin, ymin, xmax, ymax) in a decreasing queue—sorted by descending AGM values—facilitates the early termination of comparisons. For instance, <F> need only compare with <A, B, C> and can terminate the process upon reaching <D>, as $||AGM(<A, B, C>) - AGM(<F>)|| \geq \delta$. This reduction progresses from $n$ possible comparisons to $k$ (where $k < n$), thus enhancing computational efficiency.

Algorithm \ref{DRM} explains in detail.

\begin{algorithm}
\caption{Dynamic Region Merge Algorithm}
\label{DRM}

\footnotesize
\begin{flushleft}
\hspace*{\algorithmicindent}\textbf{Input:}\\
\hspace*{\algorithmicindent} A Study Area $S$ and A Set of Gaps \lbrack$G_{1}$,...,$G_{n}$\rbrack\\
\hspace*{\algorithmicindent} An Estimate Signal Coverage Area $MP$\\
\hspace*{\algorithmicindent} Participation Index $\leftarrow \lambda$ and Significant Score $\leftarrow K$\\
\hspace*{\algorithmicindent}\textbf{Output:}\\
\hspace*{\algorithmicindent} Abnormal Gaps with Top $K$ AGM\lbrack$G_{1}$,..,$G_{k}$\rbrack
\footnotesize
\end{flushleft}
\begin{algorithmic}[1]
    \Procedure{:}{}
    \State \textbf{Step 1 to Step 3 are same as Algorithm 1}
    \State \textbf{Step 4:} \textbf{Compute AGM Score $\forall$ $G_{i}$ in Observed List}
        \ForEach{$G_{i}$ $\in$ Spatial Index}
        \State Perform sliding window increment to add $GC_{i}$ $\in$ $G_{i}$
        \State AGM ($G_{i}$) $\leftarrow$ AGM($GC_{i}$ $\in$ $G_{i}$)
        \State AGM ($G_{i} \cap$ Union Group) $\leftarrow$ AGM ($GC_{i}$ $\in$ Union Group + $GC_{i}$ $\in$ $G_{i}$)
        \If {$G_{i}$ $\in$ Union and/or Intersection Group}
            \If {AGM($G_{i} \cap$ Union Group) - AGM($G_{i}$) $\leq$ $\delta$}
                \State \textbf{AGM Decreasing Queue} $\leftarrow$ AGM($G_{i} \cap$ Union Group)
            \Else { Compute separate AGM for $G_{i} \cap$ Union Group and $G_{i}$}
            \EndIf
        \EndIf
        \EndFor
    \State \textbf{Step 5:} \textbf{Return Abnormal Gaps to the Human Analysts}
    \State \Return \lbrack$G_{1}$,..,$G_{k}$\rbrack based on Priority Threshold PI
\EndProcedure
\end{algorithmic}
\end{algorithm}

\textbf{Step 1-3:} Same as the Algorithm 1

\textbf{Step 4:} For each gap $G_{i}$ in the spatial index, we perform a sliding window operation over the qualified gaps which satisfy the maximal union merge criteria by adding only new $GC_{i}$ $\in$ $G_{i}$ and then recomputing the AGM score for the entire union-merged group and also for the new gap. For instance, line 7 shows the overall operation after $G_{i}$ satisfies the maximal union merge criteria with the Union Group and appends new $GC_{i}$ $\in$ $G_{i}$. The sliding window operation ensures linear operations are given prior information about the left and right pointers along with $GC_{i}$. We then perform early termination based on Equation \ref{earlytermination}, where we check if the change in prior AGM score (i.e., union group $\not\in$ $G_{i}$) with new AGM score (i.e., $G_{i}$). If the change is not significant, we consider AGM of maximal union group $\in$ $G_{i}$; otherwise, we compute the AGM separately.

\textbf{Step 5:} Same as the Algorithm 1

\textbf{Execution Trace:} 
Given gaps <A>,<B>,<C>,<D>, and <E> in the same index, we first initialize the left (L) and right (R) pointers at the boundary of the MOBR along <A, B, C>, where <A, B, C> satisfies Intersection Bounds (IB). If gap <D> intersects with <A, B, C> and $\lambda \geq \theta$, we increment our right pointer (R) if the early termination criteria are satisfied. For instance, Figure \ref{fig:Exec_Trace_DRM} (b) in the middle shows that the difference between AGM(<A, B, C>) and AGM(<D>) is smaller than a pre-defined threshold $\delta$, i.e., similar to Equation \ref{earlytermination}. However, the rightmost box in Figure \ref{fig:Exec_Trace_DRM} (c) shows that Equation \ref{earlytermination} was not satisfied for <D> due to a change in the signal coverage map, which changes the abnormal gap measure for <A, B, C> and <D>. Hence, AGM(<A, B, C>) and AGM(<D>) are computed separately, and both left and right pointers are incremented by one position. 

\begin{figure}[ht]
    \centering
    \includegraphics[width=0.95\textwidth]{./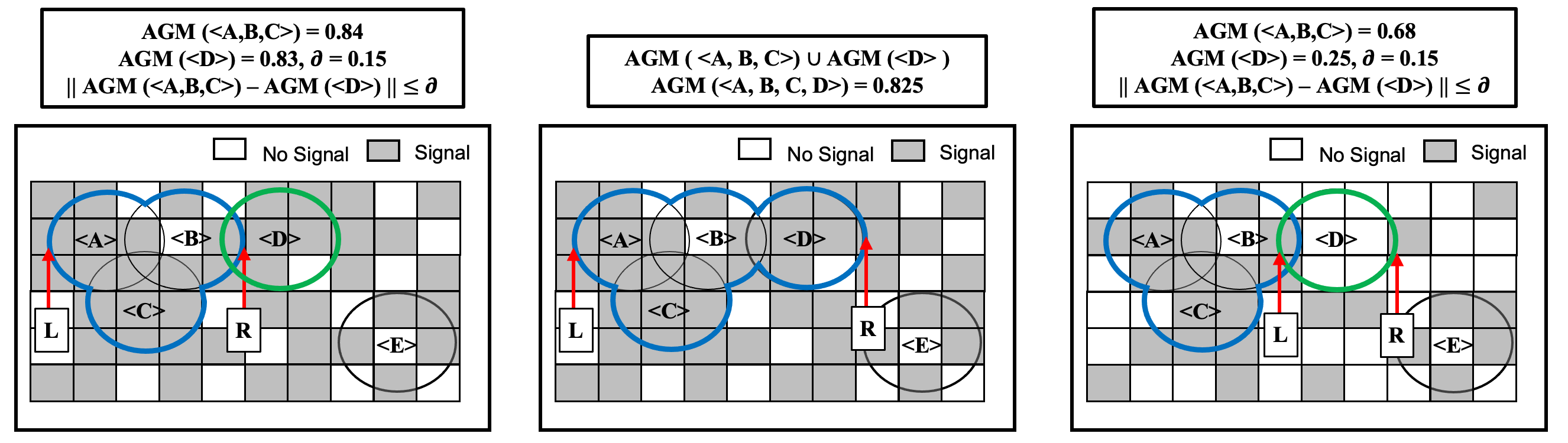}
    \caption{Dynamic Region Merge Execution Trace\\}
    \label{fig:Exec_Trace_DRM}
\end{figure}


\section{Theoretical Evaluation}
\label{section:TheoreticalEvaluation}
We evaluate our proposed algorithm STAGD for correctness and completeness in Section \ref{subsection:correct_complete}. Section \ref{subsection:TimeComplexity} further analyzed STAGD's time complexity compared with the baseline Memo-AGD for both average and worst case.
\subsection{Correctness and Completeness}
\label{subsection:correct_complete}
\begin{lemma}~\label{lemma:Memo-AGM_correct} 
The space-time-aware gap detection and dynamic region merge algorithms are correct.
\end{lemma}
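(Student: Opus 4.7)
The plan is to prove correctness by establishing that every gap reported by STAGD+DRM satisfies the abnormality criteria defined in Section~\ref{subsection:Problemstatement}, and that the computed AGM scores coincide with those produced by the baseline Memo-AGD on the same input. I would decompose the claim into three sub-invariants: (i) the temporal-merge-aware indexing preserves exactly the set of candidate gap pairs whose EMPs overlap; (ii) the hierarchical spatial indexing together with the maximal union merge-based criterion yields merged groups that respect the degree-overlap threshold $\lambda$; and (iii) the sliding-window AGM updates, combined with the early-termination test of Equation~\eqref{earlytermination}, return AGM values equal to those obtained by recomputing $GC_{int}$ and $GC_m$ from scratch.

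For sub-invariant (i), I would argue that using $t_{start}$ as an array index is an injective mapping in the chosen one-second resolution, so scanning the Temporal Index List followed by the monotonic-stack comparison of $G_j^{t_{end}}$ with $G_i^{t_{start}}$ enumerates precisely the same overlapping pairs that a comparison-based sweep would, hence no pair is gained or lost in the filter step. For sub-invariant (ii), I would appeal to the standard property that the MOBR of a geo-ellipse contains that ellipse: if two MOBRs are disjoint then their ellipses are disjoint, so the R*-tree never introduces false negatives at the spatial filter. For the merge criterion itself, a short case analysis on the two conditional branches of Algorithm~\ref{STAGD} (the IB-and-UB case and the UB-only case) would show that $G_i$ is merged into an existing group if and only if its degree of overlap with that group meets $\lambda$, matching the grouping semantics of Memo-AGD.

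For sub-invariant (iii), I would proceed by induction on the number of gaps appended to a group: the base case uses the MOBR of the initial maximal subset, and the inductive step shows that advancing the left or right pointer and appending only the newly exposed grid cells yields the same $GC_{int}$ and $GC_m \cap GC_{int}$ as a full recomputation, so the AGM value is preserved. The early-termination rule then requires a separate argument: when $\|AGM(\langle G_i,\dots,G_k\rangle)-AGM(\langle G_j\rangle)\| > \delta$, the algorithm reports two individually scored groups, and each is correct by the sliding-window argument; when the inequality fails, the merged score is retained, which is again correct by the same argument applied to the enlarged MOBR. Finally, the AGM-based decreasing queue only reorders candidate comparisons and may prune some of them, but it never merges two groups that fail Equation~\eqref{earlytermination}, so it cannot alter the set of reported groups.

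The main obstacle I anticipate lies in sub-invariant (ii): reconciling the order in which candidate pairs are visited under the hierarchical spatial index with the order assumed by the maximal union merge criterion. A naive traversal could let a gap fail to merge at one leaf even though the same gap would merge with the corresponding union group at a coarser level, producing a spurious isolated gap. Handling this cleanly will require stating the invariant that at every leaf the cached $G_j^{UB}$ equals the union of all previously merged ellipses whose MOBRs are contained in that leaf, and then showing that this invariant is preserved under R*-tree insertion, node splitting, and reinsertion; the rest of the proof is mostly bookkeeping on top of this lemma.
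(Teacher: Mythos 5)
Your decomposition is genuinely different from, and considerably more ambitious than, the proof the paper actually gives. The paper's argument has three short steps: (1) termination, because the number of gaps and hence of candidate pairs is finite; (2) soundness of the intersection-boundary criterion via the elementary observation that if $Polygon(G_{1}) \cap Polygon(G_{2}) \cap Polygon(G_{3}) \neq \emptyset$ then every pairwise intersection is nonempty, so an IB-based group never admits a non-intersecting pair; and (3) an informal claim that the long chains produced by the union-boundary criterion are kept under control by the DO threshold and the early-termination test, so that DRM computes AGM over the intended spatial bounds. Your sub-invariants (i) and (iii) --- that the array-based temporal index enumerates exactly the overlapping pairs a comparison sweep would, and that the sliding-window pointer updates reproduce a from-scratch computation of $GC_{int}$ and $GC_{m} \cap GC_{int}$ --- are not argued in the paper at all and would strengthen it if carried out; likewise your point that MOBR containment guarantees no false negatives at the spatial filter is the right justification for the R*-tree step, which the paper leaves implicit.

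The genuine gap is your framing of correctness as output agreement with Memo-AGD. The maximal union merge criterion is introduced precisely as a \emph{less strict} condition than Memo-AGD's intersection-based grouping: a gap that meets only the union boundary of a group (subject to $\lambda$) is merged even though it would remain isolated under Memo-AGD, and the early-termination rule of Equation \ref{earlytermination} can subsequently split a group that Memo-AGD would score as one unit. The two algorithms therefore produce different merged groups and different AGM denominators by design, so the target of your sub-invariant (iii) --- that the scores coincide with the baseline's --- is not provable. You would need to restate correctness intrinsically (every reported group satisfies the $\lambda$ and $\delta$ conditions and its AGM equals Definition \ref{def:AGM} evaluated on that group's region), which is closer to what the paper loosely asserts. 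Your concern about traversal order at the leaves of the spatial index is well placed --- neither the prose nor Algorithm \ref{STAGD} pins down the invariant on the cached union boundary that would rule out spurious isolated gaps --- but note that the paper sidesteps this by never claiming a canonical grouping, only that whatever groups are formed consist of genuinely interacting gaps.
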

\begin{proof}
Given a finite set of $|N|$ trajectory gaps, space-time aware gap detection (STAGD) loops around a finite number of times and terminates. The algorithm prunes on each trajectory gap and then conducts a spatial-temporal intersection producing a finite number of candidate pairs. 

In addition, the intersection boundary IB within candidate pairs does not allow pairs whose IB does not intersect with any of the candidate pairs. For instance, if $G_{i}^{IB}$ $\in$ $Polygon(G_{1}) \cap Polygon(G{2})$ and $G_{j}^{IB}$ $\in$ $Polygon(G_{3})$, then $G_{j}^{IB} \cap G_{i}^{IB}$ $\gets Polygon(G_{1}) \cap Polygon(G_{2}) \cap Polygon(G_{3}) \neq \emptyset$. This proves that $Polygon(G_{1}) \cap Polygon(G_{2})$, $Polygon(G_{2}) \cap Polygon(G_{3})$ and $Polygon(G_{1}) \cap Polygon(G_{3})$ $\neq \emptyset$ since the overall intersection of $Polygon(G_{1}) \cap Polygon(G_{2}) \cap Polygon(G_{3}) \neq \emptyset$. 

In STAGD, the Union Boundary (UB) considers candidate pairs which involved in intersection with other gaps within the entire subgroup (i.e., $Polygon(G_{1}) \cap Polygon(G_{2})$ or $Polygon(G_{2}) \cap Polygon(G_{3})$ or $Polygon(G_{1}) \cap Polygon(G_{3})$ $\in$ $\emptyset$). This may result in long chains which can be discontinued or broken by any of the proposed interest measures, such as the DO threshold and early termination criteria defined within the paper. 

In terms of AGM computations correctness, the proposed Dynamic Region Merge (DRM) Algorithm provides a correct output within a given signal coverage map by considering signal coverage variability at a given defined geographic space. The proposed interest measures (e.g., DO threshold) and early termination criteria ensure correct AGM computations over derived spatial bounds from both STAGD and DRM algorithms. Hence STAGD and DRM are correct.
\end{proof}

\begin{lemma}~\label{lemma:Memo-AGM_complete} 
Space time-aware gap detection and dynamic region merge are complete.
\end{lemma}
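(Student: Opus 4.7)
The plan is to prove completeness by walking through each pruning stage of STAGD and DRM and showing that none of them can discard a gap (or gap pair) that ought to appear in the final output. Completeness here means that every gap (or maximal group) whose abnormal gap measure exceeds the user-specified priority threshold is reported, and no candidate that satisfies the DO threshold $\lambda$ is silently dropped. The overall strategy is therefore a ``superset'' argument: at each stage, the working set contains all true positives, even if it may also contain some to-be-refined candidates.

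First, I would address the temporal filter in Step~2 of Algorithm~\ref{STAGD}. Because the Temporal Index List spans $[\min(t_{start}), \max(t_{end})]$ over all $G_{i}$, every gap is inserted at a unique position, and the monotonic Non-Observed List preserves the order of $t_{start}$. For any two gaps $G_i, G_j$ with overlapping EMPs, the check $G_j^{t_{end}} \le G_i^{t_{start}}$ within the local observed list is guaranteed to be encountered, so the pair enters $G_i^{CP}$. No temporally overlapping pair is omitted.

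Next, I would argue that the hierarchical spatial index in Step~3 is conservative: the R*-tree stores MOBRs, and any two geo-ellipses whose MOBRs intersect are guaranteed to be visited during a containment/overlap query. False positives (MOBR overlap without ellipse overlap) are acceptable, since they are later resolved by the polygon intersection test at the leaf; false negatives cannot occur because every MOBR fully contains its geo-ellipse. Then, within each candidate pair, the maximal union merge criterion (lines 18--22) checks both the intersection bound (IB) and the weaker union bound (UB). Because the UB test strictly subsumes the IB test, every gap that would have been merged under an IB-only rule is still merged here, and additionally those that intersect only the union are absorbed. Hence no gap satisfying the DO threshold $\lambda$ with an existing group is left isolated without consideration.

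For the DRM phase, I would show that the sliding-window mechanism preserves completeness: adding a gap $G_i$ extends the left/right pointers of the enclosing MOBR to cover all new $GC_i$ cells, so the updated AGM reflects every reported cell contributed by the merged region. The early-termination criterion in Equation~\ref{earlytermination} does not discard any gap; when the AGM gap exceeds $\delta$, the algorithm computes AGM$(<\!G_i,\ldots,G_k\!>)$ and AGM$(<\!G_j\!>)$ \emph{separately}, so both scores still reach the final ranking. The AGM-based decreasing queue only reorders comparisons; every merged region remains in the queue and is eligible for selection among the top-$K$ results.

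The main obstacle, and the step requiring the most care, is the maximal union merge criterion combined with early termination: I must rule out the scenario in which a gap $G_i$ is absorbed into a union group whose averaged AGM masks an individually abnormal subregion, or conversely where early termination severs a chain that would have been part of a legitimate maximal group. The resolution is that the union-merge branch only fires when $\mathrm{DO} \ge \lambda$ (preserving the pairwise criterion by construction), and that the early-termination branch always emits separate AGM scores rather than dropping either side; together these guarantee that every region with AGM above the priority threshold $K$ is returned. Combining the four arguments, STAGD and DRM are complete.
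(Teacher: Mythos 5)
Your proof is sound to the standard of the paper, but it takes a noticeably different (and more demanding) route. The paper's own proof is a short input--output accounting argument: it observes that every gap in the Non-Observed List ends up in exactly one place in the output --- either a maximal merged set or an independent/isolated set --- and concludes completeness from the fact that the count of gaps emitted equals the count of gaps ingested (illustrated with the execution-trace example $\langle A,B,C,D\rangle$ and $\langle E\rangle$ versus $\langle A\rangle,\ldots,\langle E\rangle$). You instead prove a per-stage invariant: the temporal index admits every temporally overlapping pair, the MOBR-based spatial index can only produce false positives (never false negatives, since each MOBR contains its geo-ellipse), the union-bound test subsumes the intersection-bound test, and the DRM early-termination branch emits both AGM scores separately rather than discarding one. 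This ``each filter is conservative, so the composition is conservative'' decomposition is more informative than the paper's counting argument, and it is the only one of the two that actually engages with the question of whether a pruning step could wrongly discard something. You also quietly adopt a stronger notion of completeness (every gap or group above the priority threshold is \emph{reported}) than the paper's weaker one (every gap is \emph{accounted for} in some output set); your argument supports the stronger claim, which is a bonus. One small gloss: you say the AGM-based decreasing queue ``only reorders comparisons,'' but it in fact terminates them early; the reason this is safe is the monotonicity of the queue --- once $\lVert \mathrm{AGM} \rVert$ differences exceed $\delta$ they can only grow for later entries --- and it would strengthen your write-up to state that explicitly.
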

\begin{proof}
Given a finite set of $|N|$ gaps, STAGD and DRM covers all the gaps within their candidate pair sets. For instance, they ensure that the total number of gaps $G_{1}$, $G_{2}$, ..., $G_{n}$ which are participating in subsets is equal to the number of gaps in the Non-Observed List. For instance, Space Time-Aware Gap Detection algorithms execution trace output (i.e., <A, B, C, D> and <E>) the same number of gaps as defined in their Non-Observed List (i.e., <A>,<B>,<C>,<D>,<E>). This proves that both Space Time-Aware Gap Detection is complete. In addition, all gaps $G_{1}$, $G_{2}$, ... , $G_{n}$ belonging either to a maximal sets (e.g., <A,B,C>) or an independent set (e.g., <A>,<B>, and <C>) are considered. Hence, STAGD and DRM are complete.
\end{proof}
\subsection{Asymptotic Time Complexity}
\label{subsection:TimeComplexity}
We briefly discuss the asymptotic complexity of baseline Memo-AGD and the proposed STAGD+DRM algorithm based on preprocessing gap enumeration and AGM computations. However, both algorithms heavily depend upon gap distribution within a study area which indirectly depends upon the worst and best case. For instance, if independent gaps (i.e., gaps that are not participating in any maximal sets) have skewed distribution at a given index, then both algorithms has the same performance asymptotically. However, if gaps are merged and/or in uniform distribution, the proposed algorithms out-performs the baseline via indexing and DRM optimization. Hence, we will consider both the worst (a skewed distribution with independent gaps) and average-case (merged and independent gaps) for both algorithms.   

\textbf{Memo-AGD:} Given $n$ gaps, sorting operations require $\mathcal{O}(n\log{}n)$ time based on the $t_{start}$ of the gap. The Memo-AGD use caching based on $IB$ resulting in k maximal sub-groups in quadratic time-complexity. In the case of AGM computations, given k merged gaps and M X N grid cells, results in $k \times \mathcal{O}(M \times N)$ where (k << N). However, in the worst case, $\mathcal{O}(n^{2})$ and $n \times \mathcal{O}(M \times N)$ is performed for gap enumeration and AGM computations, respectively. 

\textbf{STAGD + DRM:} Given $n$ gaps, the space time-aware gap detection algorithm uses comparison-less sorting based on $t_{start}$ in linear time (i.e., $\mathcal{O}(n)$) followed by \textit{gap enumeration} and \textit{AGM computation}. In the case of gap enumerations, we index each gap in $\mathcal{O}(n\log{}n)$ time along with $k$ linear comparisons at the leaf-node level, where $k$ are merged groups ($k$ << $n$). This results in overall $\mathcal{O}(n\log{}n)$ + $\mathcal{O}(n \times k)$. In the case of AGM computations, the cell in polygon operation will speed up due to sliding window and memoization in the average case. However, if all gaps are independent and has skewed distribution in a single index (i.e., worst case), results in $\mathcal{O}(n)^{2}$ in comparison operations. In addition, given $N$ gaps and $M$ cells, we perform $N \times M$ operations in the worst case (where no gaps are intersecting). If gaps are involved in union operations with $k$ maximal groups, we perform $k \times M \times N$ operations where $k$ << $N$. Hence, the worst and average time complexity of DRM is $n \times \mathcal{O}(M \times N)$ and $k \times \mathcal{O}(M \times N)$, respectively.

\section{Experimental Validation}
\label{section:validation}
In this section, we present the comprehensive design of the experiments, including the related work, baseline, and proposed algorithms, which were outlined in Section \ref{subsection:ExperimentDesign}. We also discuss the real-world dataset \cite{aisdataUS} and the synthetic data generation method used for evaluating the solution quality of the algorithms. Furthermore, in Section \ref{subsection:Results}, we present the results of the experiments, which were based on accuracy and computation time, and varied depending on different parameters, such as the number of gaps and GPS points.

\subsection{Experiment Design}
\label{subsection:ExperimentDesign}

\textbf{Comparative Study:} The goal of the experiments was to assess and compare the solution quality of three methods: Linear Interpolation (Related Work), K Nearest Neighbor Imputation (Related Work), and STAGD+DRM (Proposed). In the case of execution time, we compared Memo-AGD (Baseline) and STAGD+DRM (Proposed). Both solution quality and computational efficiency were compared based on five parameters: the number of GPS points, the number of trajectory gaps, the effective missing period (EMP) (i.e., the minimum time period the object is missing), the average speed ($S_{avg}$), and the degree of overlap (DO) threshold. The detailed design of the experiments is presented in Figure \ref{fig:ExperimentDesign}.

\begin{figure*}[ht]
    \centering
    \includegraphics[width=0.65\textwidth]{./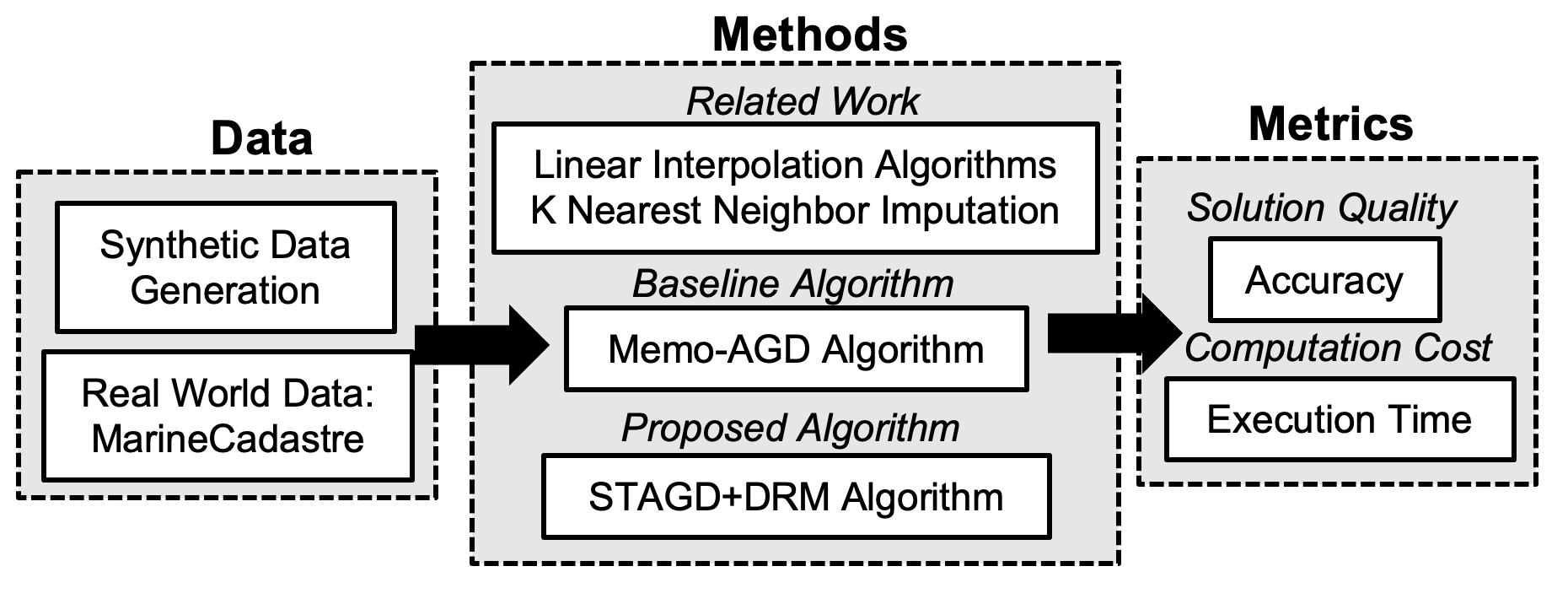}
    \captionsetup{justification=centering}
    \caption{Experiment Design}
    \label{fig:ExperimentDesign}
\end{figure*}

\textbf{KNN Imputation:} In this paper, we apply the k-Nearest Neighbors (k-NN) imputation method, previously utilized for anomaly detection in \cite{mazzarella2017novel}, to identify and address the issue of missing Automatic Identification System (AIS) messages, referred to as dropouts, for vessels at specific times. This method aims to reconstruct the missing data, including the estimated position of the vessel and the Received Signal Strength Indicator (RSSI). The vessel's position is initially estimated or predicted using tracking systems and models such as the Constant Velocity Model (CVM). Subsequently, we attempt to estimate the missing RSSI values. However, as RSSI data is not provided in \cite{mazzarella2017novel}, we employ a KNN imputer that calculates based on the inverse-weighted Euclidean distance from the predicted position to the fixed K neighbors. In our experimental setup, we chose K=5 and computed the weighted average from all existing neighbors to update the latitude-longitude coordinates derived using the Constant Velocity Model (CVM).

\textbf{Real World Dataset:} We utilized the real-world MarineCadastre \cite{aisdataUS} dataset to measure computational efficiency. The dataset contains various attributes, such as longitude, latitude, speed over ground, and others, for 150,000 objects that were recorded between 2009 and 2017. The geographical area covered by the data extends from 180W to 66W degrees longitude and from the 90S to 90N degrees latitude, based on the WGS 1984 coordinate system. In our experiments, we only considered the attributes of longitude, latitude, time, speed over ground (SOG), and Maritime Mobile Service Identity (MMSI). The maximum speed was calculated by taking the average SOG of the effective missing period (EMP). Figure \ref{fig:ER Diagram} shows an Entity Relationship (E-R) Diagram for three entities: vessels, trips, and location signals. A vessel can take multiple trips, and each trip can emit multiple location signals.
\begin{figure*}[ht]
    \centering
    \includegraphics[width=0.75\textwidth]{./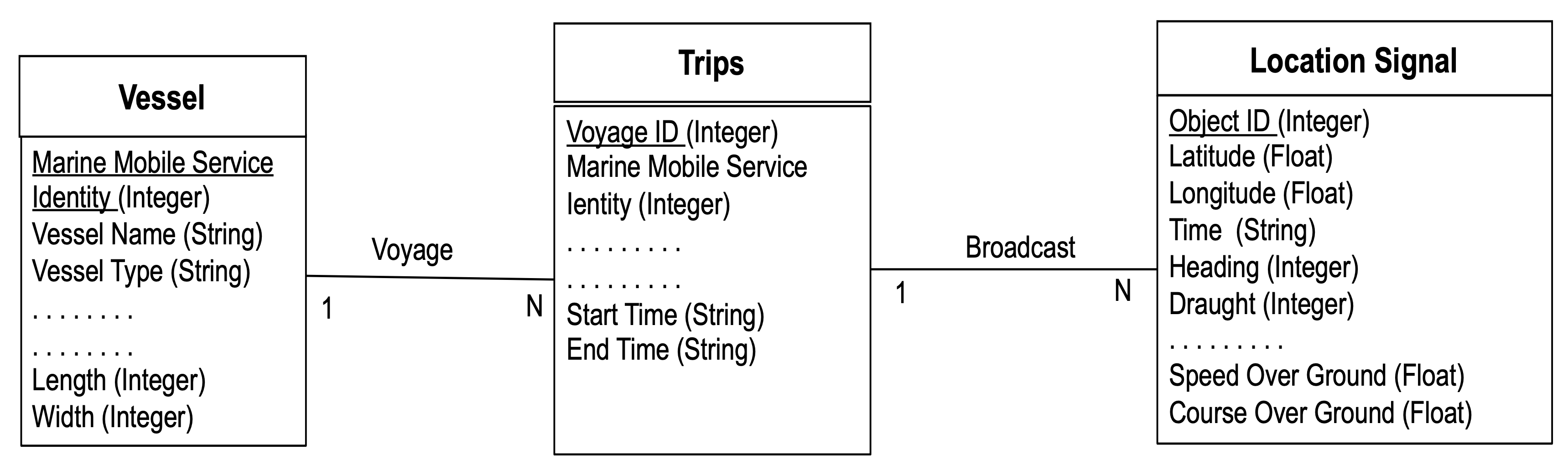}
    \captionsetup{justification=centering}
    \caption{Entity Relation diagram of the MarineCadastre \cite{aisdataUS} dataset}
    \label{fig:ER Diagram}
\end{figure*}


\textbf{Synthetic Dataset Generation:} The experiment for solution quality, lacked ground truth data, meaning we had no information on whether a gap in a trajectory was normal or abnormal. To overcome this, the proposed algorithms were evaluated using synthetic data created from a real-world dataset. The dataset consists of 1.25 x $10^{5}$ trajectories collected in the Bering Sea, covering a study area from $179.9W$ to $171W$ degrees longitude and from $50N$ to $58N$ degrees latitude, with 1500 trajectory gaps between 2014 and 2016. Figure \ref{fig:SyntheticDataGeneration} shows the detailed methodology for synthetic data generation.
 
\begin{figure}[ht]
    \centering
    \includegraphics[width=0.90\textwidth]{./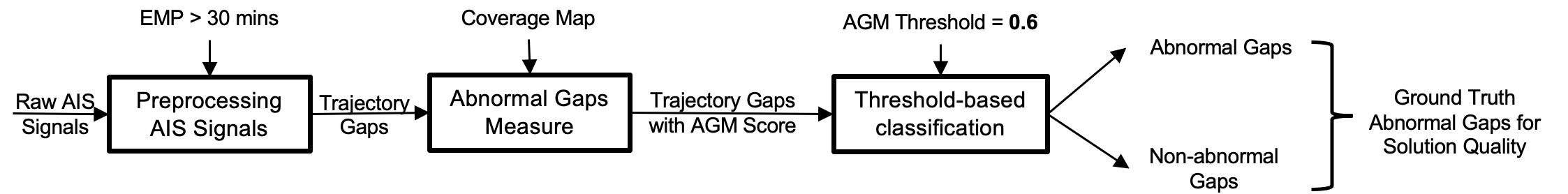}
    \caption{Synthetic Data Generation\\}
    \label{fig:SyntheticDataGeneration}
\end{figure}

The data was preprocessed to identify gaps lasting more than 30 minutes, which were considered trajectory gaps. The AGM (Abnormal Gap Measure) score was calculated using linear interpolation along with the proposed methods, and each gap was classified as either abnormal or non-abnormal based on an AGM threshold of 0.6. Gaps with an AGM score above 0.6 were deemed abnormal. The choice of the AGM score threshold was designed to reduce bias in the accuracy of the results, which was achieved by conducting multiple trial runs of both the baseline and proposed approaches.

\textbf{Computing Resources: } We performed our experiments on a system with a 2.6 GHz 6-Core Intel Core i7 processor and 16 GB 2667 MHz DDR4 RAM.

\subsection{Experiment Results:}
\label{subsection:Results}

\textbf{Solution Quality:} The proposed STAGD+DRM approach was compared with a Linear interpolation and KNN Imputation method that is widely used in the literature \cite{chen2013iboat} using space-time interpolation approach (also used in Memo-AGD \cite{sharma2022abnormal}) in terms of accuracy. The accuracy metric is defined in Equation \ref{ineq19} as the ratio of the actual trajectory gaps involved in abnormal behavior to the total number of abnormal gaps.
\begin{equation}
    \label{ineq19}
    Accuracy = \frac{Actual\ Gaps\ involved\ in\ Abnormal\ Behavior}{Total\ Gaps\ involved\ in\ Abnormal\ Behaviour}
\end{equation}
In Equation \ref{ineq19}, the "Actual Gaps" in the numerator are the "true positives" (correct abnormal gaps) and "true negatives" (correct non-abnormal gaps) generated by the two baseline approaches from the related work (Linear Interpolation and KNN Imputation) and the proposed approach (STAGD+DRM). Total Gaps in the denominator are all the gaps that are correctly and incorrectly classified as abnormal in a given study area.

The detailed sensitivity analysis based on different parameters for the related work, baseline, and the proposed algorithms are as follows:\\

\textbf{(1) GPS Points:} First, we fixed the EMP at 60 mins, the number of objects (or vessels) at 500, the speed at 20 $meters/second$, DO threshold to 0.5, and varied the number of GPS points from 2.5 × $10^{5}$ to 1.25 x $10^{6}$. The results in Figure \ref{subfig:Acc_GPS} show that linear interpolation under-performs against STAGD with Dynamic Region Merge(STAGD+DRM). Since DRM ensures to capture of more accurate gaps, especially in case of the high variability of signal coverage map. Linear Interpolation and KNN imputation captures abnormal gaps with less accuracy as compared to the proposed STAGD+DRM as GPS density grows.

\textbf{(2) Objects (Vessels):} We fixed the number of GPS points to $5.0 \times 10^{5}$, EMP to 60 mins, Speed at 20 $meters/second$, DO threshold to 0.5 and varied the number of objects from 2500 to 12500. Figure \ref{subfig:Acc_Obj} shows that the proposed approach captures more accurate signal coverage in terms of maximal subgroups as compared to linear threshold since it does not consider the DO Threshold. Linear interpolation and KNN imputation is less accurate since it does not consider space-time interpolation, which later incorrectly classifies abnormal gaps.

\textbf{(3) Effective Missing Period (EMP):} In this experiment, we set the number of objects (or vessels) to 900, the number of GPS points to $5.0 \times 10^{5}$, the Degree Overlap (DO) threshold to 0.5, the speed to 20 $meters/second$, and varied the effective missing period (EMP) from 2 to 10 hours. Figure \ref{subfig:Acc_EMP} shows that STAGD+DRM is again the more accurate since DRM provides a more precise capture of variable signal coverage in cases of large maximal union groups. With an increasing EMP, the larger gaps capture more variability in the spatial distribution of the reported cells, resulting in more accurate AGM scores. Linear Interpolation and KNN imputation both underperform in this measure.

\textbf{(4) Speed (meter/second):} The number of objects was set to 900, GPS points to $5.0 \times 10^{5}$, DO threshold to 0.5, speed to 20 $meters/second$, and the $S_{max}$ threshold was varied from $10$ to $50$ meters/second. Figure \ref{subfig:Acc_Speed} shows similar trends as EMP increases, the size of geo-ellipses also increases which later results in more potential interactions with other geo-ellipses. In addition, large geo-ellipses also result in more spatial distributions in terms of signal coverage which is better captured by STAGD+DRM and preserving its correctness due to the early stopping criterion property. Linear interpolation and KNN imputation shows no benefit because it does not take speed into account.

\textbf{(5) Degree Overlap (DO) Threshold ($\lambda$):} Last, the degree of overlap (DO) threshold was varied from $0.2$ to $1.0$ while keeping the number of objects at 900, the number of GPS points at $5.0 \times 10^{4}$, the speed at $20$ meters/second, and the EMP threshold at 60 mins. As seen in Figure \ref{subfig:Acc_DO}, STAGD+DRM algorithms display linear trends, with STAGD+DRM capturing signal coverage more accurately than linear interpolation. Linear interpolation and KNN imputation by contrast, shows no benefit because they both do not take the DO threshold into account.\\

\begin{figure}[ht]
    \centering 
\begin{subfigure}{0.30\textwidth}
  \includegraphics[width=\linewidth]{./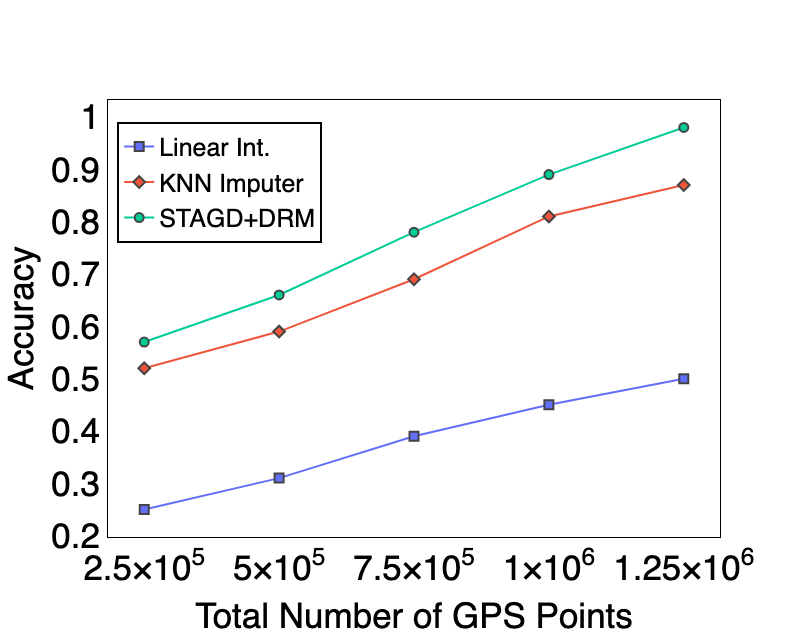}
  \captionsetup{justification=centering}
  \caption{GPS Points \break}
  \label{subfig:Acc_GPS}
\end{subfigure}\hfil 
\begin{subfigure}{0.30\textwidth}
  \includegraphics[width=\linewidth]{./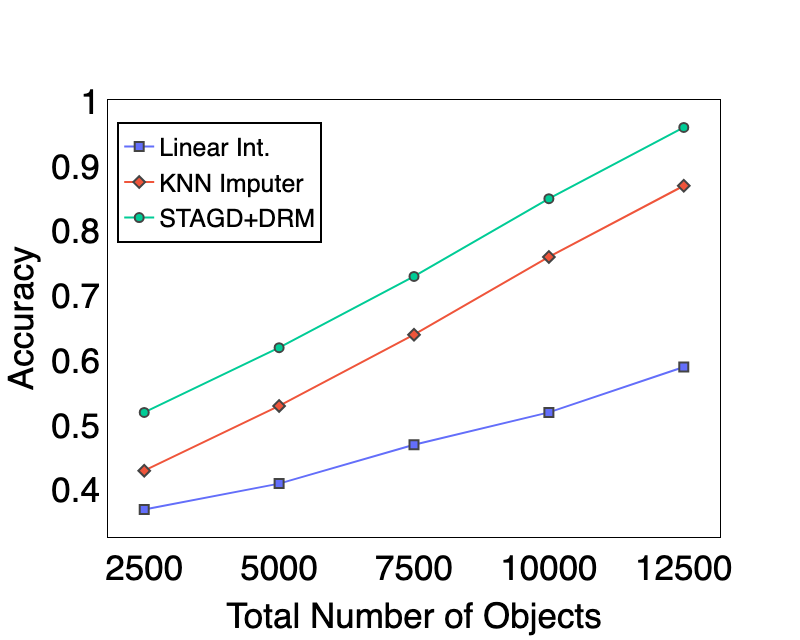}
  \captionsetup{justification=centering}
  \caption{Number of Objects \break}
  \label{subfig:Acc_Obj}
\end{subfigure}\hfil 
\begin{subfigure}{0.30\textwidth}
  \includegraphics[width=\linewidth]{./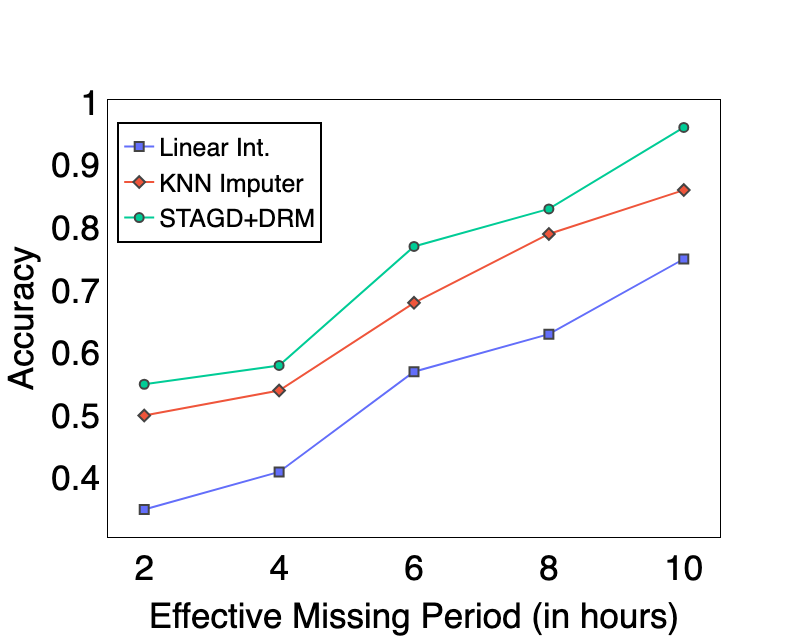}
  \captionsetup{justification=centering}
  \caption{Effective Missing Period \break}
  \label{subfig:Acc_EMP}
\end{subfigure}\hfil 
\begin{subfigure}{0.30\textwidth}
  \includegraphics[width=\linewidth]{./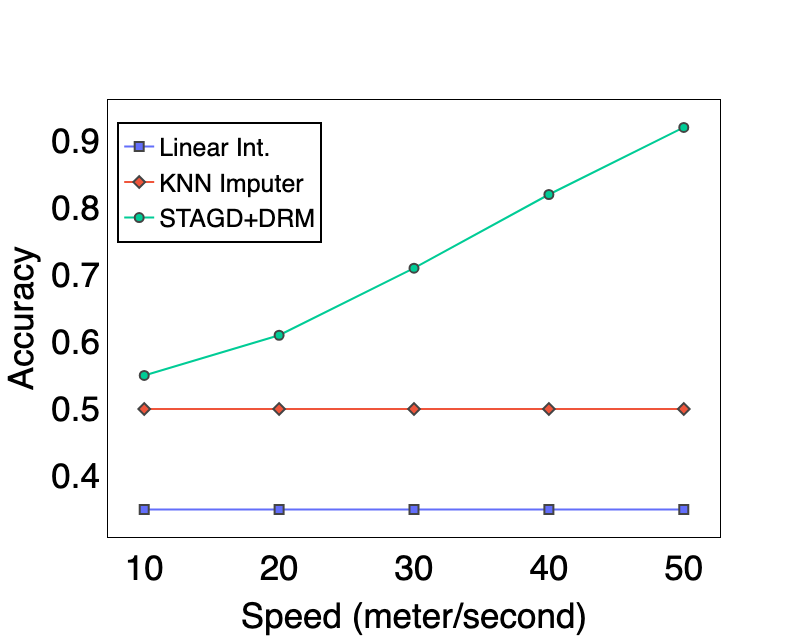}
  \captionsetup{justification=centering}
  \caption{Speed}
  \label{subfig:Acc_Speed}
\end{subfigure}\hfil 
\begin{subfigure}{0.30\textwidth}
  \includegraphics[width=\linewidth]{./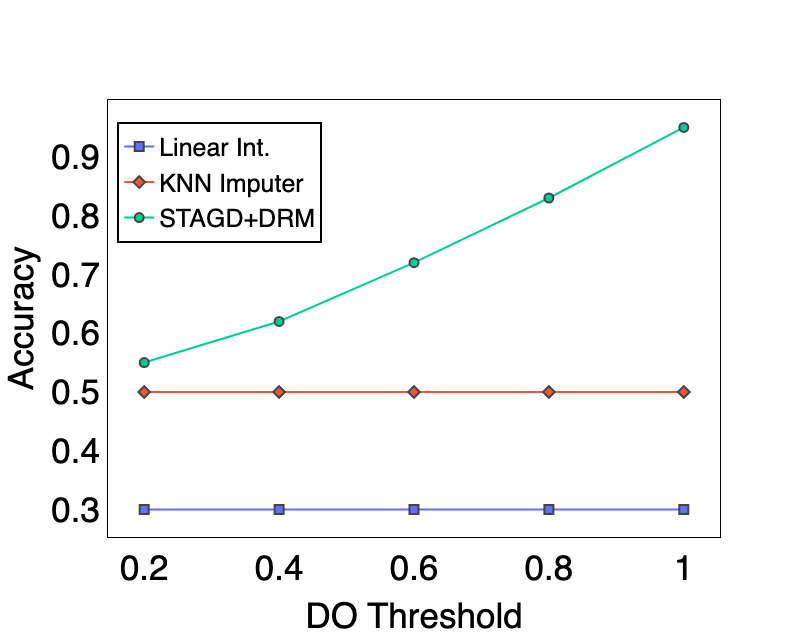}
  \captionsetup{justification=centering}
  \caption{DO Threshold}
  \label{subfig:Acc_DO}
\end{subfigure}\hfil 
\caption{Proposed STAGD+DRM is always more accurate than Linear Interpolation and KNN-Imputation under varying parameters.}
\captionsetup{belowskip=0pt}
\label{fig:SolutionQuality}
\end{figure}

\textbf{Computational Efficiency:} The proposed STAGD+DRM algorithm was compared to the baseline Memo-AGD in terms of computational efficiency using the MarineCadastre dataset \cite{aisdataUS}. Five different parameters were tested: number of GPS points, number of objects, effective missing period (EMP), maximum speed $S_{max}$ (the maximum speed achieved within the object's missing time period), and degree overlap (DO) threshold. The runtime was monitored while considering the indexing and AGM computation cost for STAGD+DRM.

\textbf{(1) Number of GPS Points:} First, we set the effective missing period (EMP) to 60 minutes, the maximum speed $S_{max}$ to $15$ meters/second, the degree overlap (DO) threshold to $0.2$, and varied the number of trajectory gaps from $500$ to $2500$. As shown in Figure \ref{subfig:Time_GPS}, STAGD+DRM consistently outperforms Memo-AGD due to comparison-less temporal indexing and reduced spatial comparisons in 3D R-Trees. Additionally, as the density of GPS points increases in the study area, the potential number of interactions between two or more geo-ellipses also increases.

\textbf{(2) Number of Objects:} In this experiment, we set the effective missing period (EMP) to 60 minutes, the maximum speed $S_{max}$ to $15$ meter/second, the number of GPS points to $5.0 \times 10^{5}$, and the DO threshold to $0.2$, and we varied the number of objects from $2500$ to $12500$. As shown in Figure \ref{subfig:Time_Obj}, the results are similar to when the number of GPS points was varied. STAGD+DRM outperforms Memo-AGD as it effectively indexes and computes AGM scores of gaps derived from different objects or vessels. 

\textbf{(3) Effective Missing Period (EMP):} Next, we set the number of objects to $2500$, the maximum speed $S_{max}$ to $15$ meters/second, the DO threshold to $0.2$, and the number of GPS points to $5.0 \times 10^{5}$. We then varied the EMP threshold from $2$ hours to $10$ hours. As shown in Figure \ref{subfig:Time_EMP}, STAGD+DRM runs the fastest. Increasing the EMP results in larger geo-ellipses and, thus, more potential interactions and larger maximal groups. 3D R-Trees, allow large gaps to be indexed, leading to a skewed distribution of gaps within the index. Both Memo-AGD and STAGD+DRM use the DO threshold to avoid these issues. However, STAGD's Dynamic Region Merge also allows for union and intersection operations and memoizes grid cells, resulting in improved performance compared to Memo-AGD.

\textbf{(4) Speed:} We set the number of objects to $2500$ and the effective missing period (EMP) to $2$ hours the DO threshold was set to $0.2$, and varied the maximum speed, $S_{max}$, from $10$ meters/second to $50$ meters/second. Again, as seen in Figure \ref{subfig:Time_Speed}, the results are similar to when the EMP was varied, as a high-speed value leads to larger geo-ellipses, resulting in more potential interactions. Furthermore, the proposed algorithm with STAGD+DRM's flexibility towards union and intersection operations outperforms the Memo-AGD algorithm.

\textbf{(5) DO Threshold ($\lambda$):} Last, we set the number of objects to 2500, the effective missing period to 2 hours, the maximum speed to 15 $m/s$, the number of GPS points to $5.0 \times 10^{5}$, and varied the DO threshold from $0.2$ to $1.0$. The proposed approach still outperformed Memo-AGD (Figure \ref{subfig:Time_DO}). A higher DO threshold meant that gaps were less likely to overlap, which led to fewer geo-ellipse clusters within the signal coverage area. While both Memo-AGD and STAGD+DRM allow for the merging of trajectory gaps, STAGD+DRM's early termination criteria also allow it to ignore all the isolated gaps within the same spatial index, thus saving time.

\begin{figure}[h!]
    \centering 
\begin{subfigure}{0.30\textwidth}
  \includegraphics[width=\linewidth]{./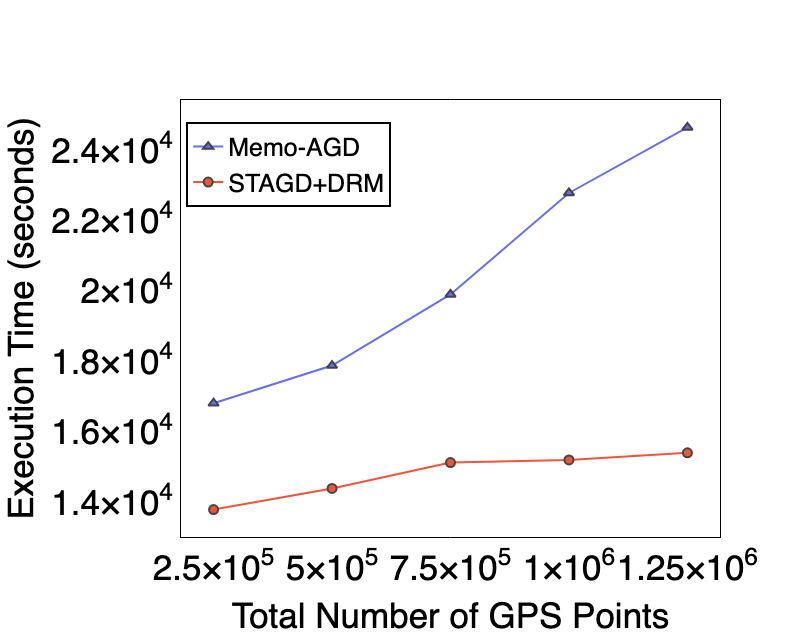}
  \captionsetup{justification=centering}
  \caption{GPS Points \break}
  \label{subfig:Time_GPS}
\end{subfigure}\hfil 
\begin{subfigure}{0.30\textwidth}
  \includegraphics[width=\linewidth]{./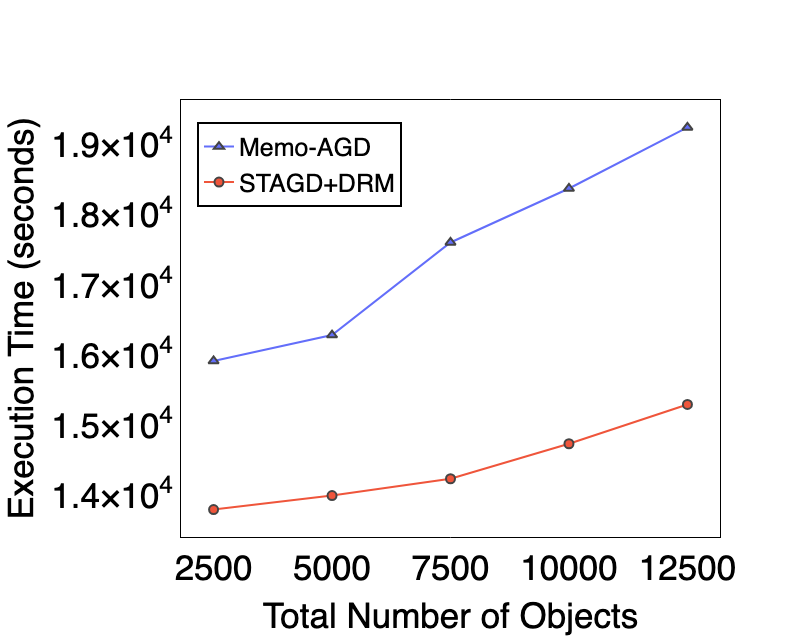}
  \captionsetup{justification=centering}
  \caption{Number of Objects \break}
  \label{subfig:Time_Obj}
\end{subfigure}\hfil 
\medskip
\begin{subfigure}{0.30\textwidth}
  \includegraphics[width=\linewidth]{./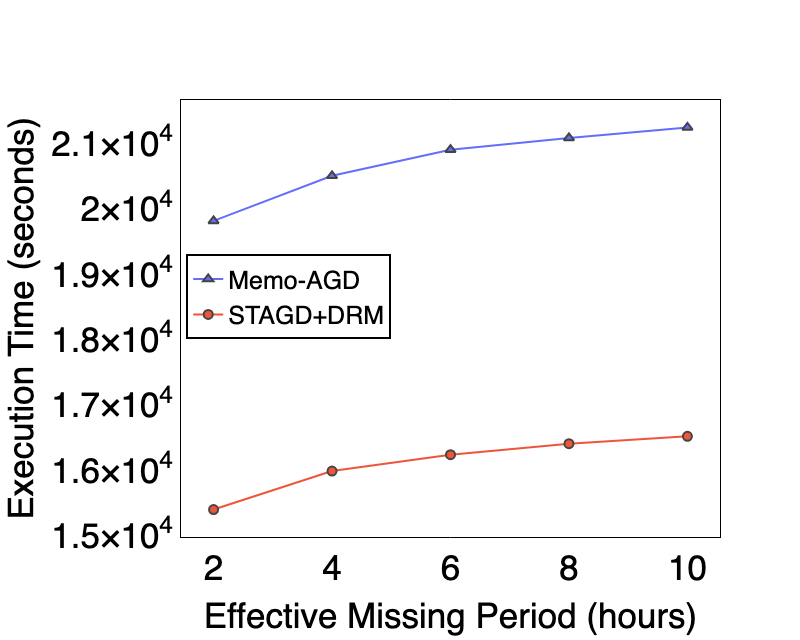}
  \captionsetup{justification=centering}
  \caption{Effective Missing Period \break}
  \label{subfig:Time_EMP}
\end{subfigure}\hfil 
\begin{subfigure}{0.30\textwidth}
  \includegraphics[width=\linewidth]{./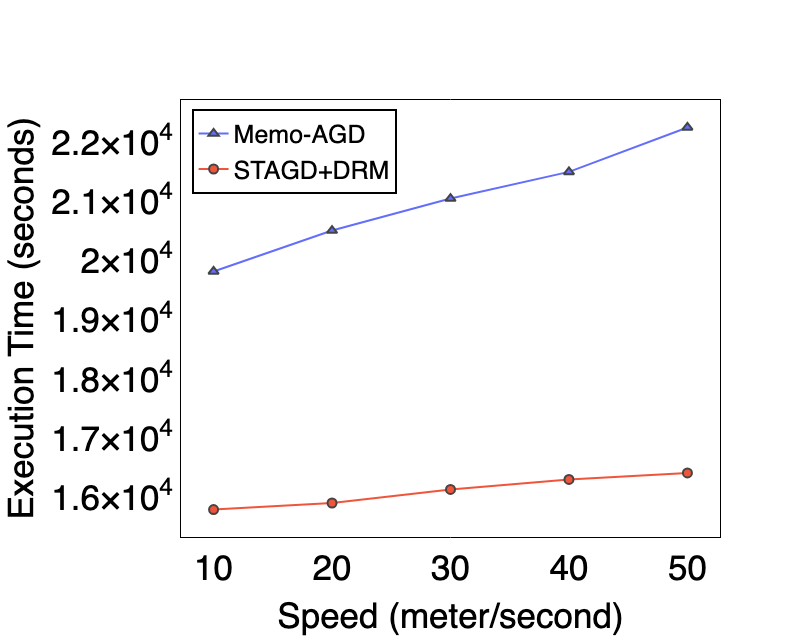}
  \captionsetup{justification=centering}
  \caption{Speed}
  \label{subfig:Time_Speed}
\end{subfigure}\hfil 
\begin{subfigure}{0.30\textwidth}
  \includegraphics[width=\linewidth]{./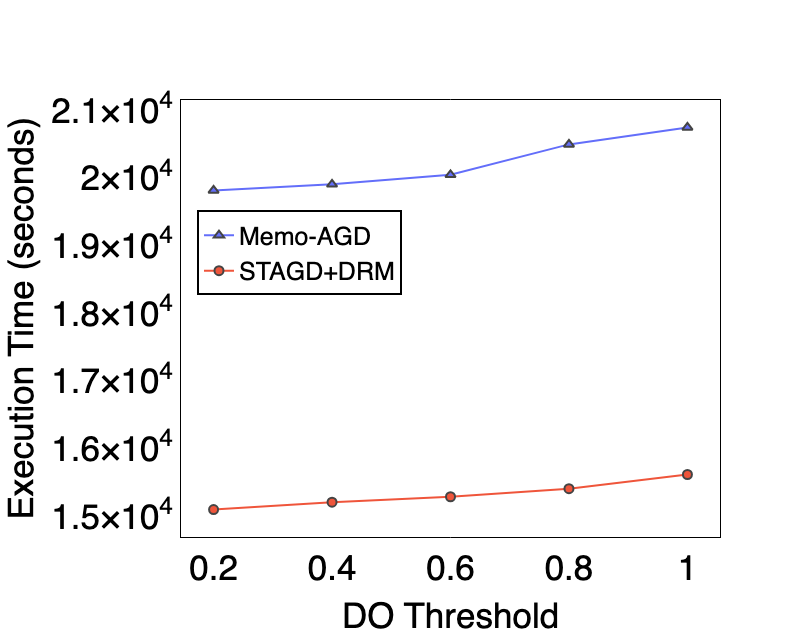}
  \captionsetup{justification=centering}
  \caption{DO Threshold}
  \label{subfig:Time_DO}
\end{subfigure}\hfil 
\caption{STAGD+DRM runs faster than Memo-AGD under different parameters}
\captionsetup{belowskip=0pt}
\label{fig:computationalefficiency}
\end{figure}

\subsection{Case Study}
We conducted a case study based on an actual event \cite{gibbens2018illegal} where a fishing vessel switched off its transponder while entering a protected marine habitat and then switched it back on after 15 days. We applied linear (shortest path) interpolation, KNN imputation, and our proposed space-time interpolation based algorithms on a MarineTraffic, ~\cite{aisdataUSNew} dataset to find abnormal regions in a study area ranging from 48.3 W to 65.8 W degrees in longitude and from 23.4 N to 29.6 N degrees in latitude near the Galapagos Marine Reserve. Figure \ref{subfig:CaseStudyNoInterpolation} shows that a ship switched off its transponder at point P in the Marine Reserve and switched it back on at point Q. Figure \ref{subfig:CaseStudyLinearInterpolation} assumes a linear interpolation where the vessel interacts with a small number of islands situated in the north resulting in an AGM score of 0.0 due to the absence of historical trajectories. Nearest Neighbor imputation shows that the vessels take a more frequented path than linear interpolation. Figure \ref{subfig:CaseStudySTInterpolation} shows the geo-ellipse formed by our space-time interpolation method. The elliptical region encompasses more historical trajectories, generating a higher AGM score (0.41) and thus meriting further investigation by human analysts.

\begin{figure}[h!]
    \centering 
\begin{subfigure}{0.30\textwidth}
  \includegraphics[width=\linewidth]{./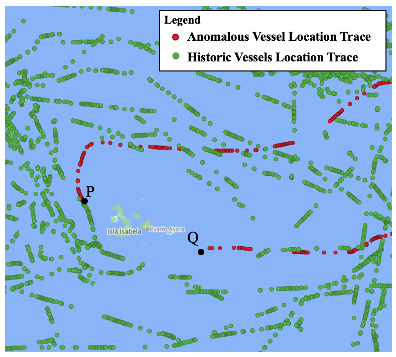}
  \captionsetup{justification=centering}
  \caption{No Interpolation \break}
  \label{subfig:CaseStudyNoInterpolation}
\end{subfigure}\hfil 
\begin{subfigure}{0.30\textwidth}
  \includegraphics[width=\linewidth]{./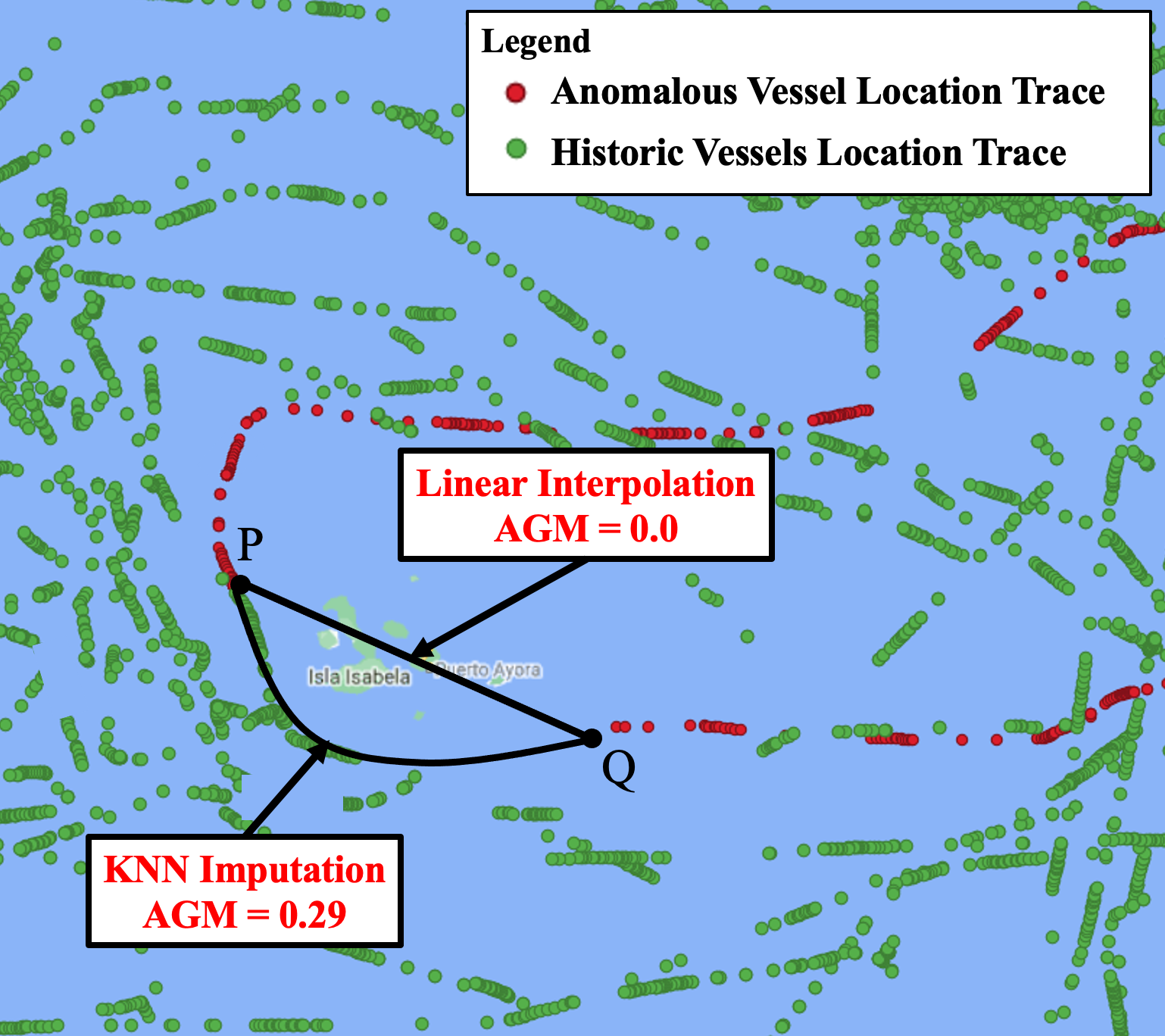}
  \captionsetup{justification=centering}
  \caption{Linear and k-NN Imputation \break}
  \label{subfig:CaseStudyLinearInterpolation}
\end{subfigure}\hfil 
\medskip
\begin{subfigure}{0.30\textwidth}
  \includegraphics[width=\linewidth]{./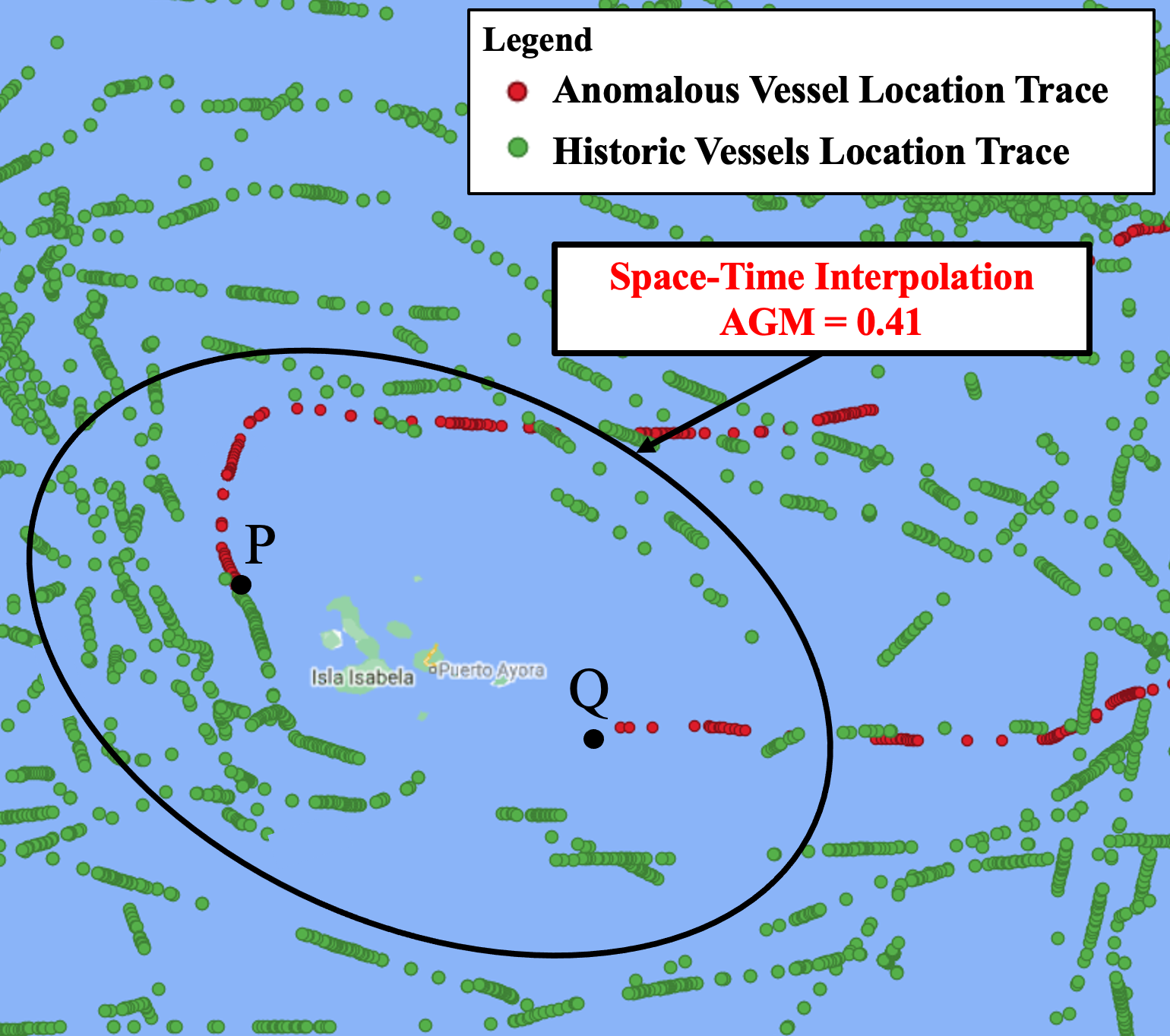}
  \captionsetup{justification=centering}
  \caption{Space Time Interpolation \break}
  \label{subfig:CaseStudySTInterpolation}
\end{subfigure}\hfil 
\caption{Comparison of k-NN Imputation, Linear Interpolation and Space-Time Interpolation}
\captionsetup{belowskip=0pt}
\label{fig:casestudy}
\end{figure}

\section{Discussion}
\label{section:Discussion}
\textbf{Signal Coverage Map:} Each grid cell $GC_{i}$ is classified as either reported or not reported by the signal coverage map based on whether its total location trace count exceeds a threshold, $\theta_e$. However, this threshold value $\theta_e$ is influenced by factors such as geographic region and the time frame for historical location traces. For example, accurately estimating signal coverage becomes difficult when a vessel enters a region with limited or no historical trace data, leading to a cold start problem. A signal coverage map is heavily dependent on the time frame range used to calculate it. For instance, a lower threshold value is needed if the time frame for global signal coverage is short, while a higher threshold value is required if the time frame is long. In this study, we only considered historical location traces based on the longest voyage time frame within a specific geographic area. Additionally, differences in geographic regions can affect the computation of the signal coverage map. For example, polar regions may have low signal coverage compared to more dense regions such as ports. Even within denser regions, signal coverage may vary due to factors such as weather or the designation of an area as a protected habitat.\\

\textbf{Abnormal Gap Measure (AGM):} AGM scores heavily depend on the geometric constraints of the geo-ellipse estimations within trajectory gaps. Similar to a classical space-time prism, a geo-ellipse is limited to the maximum speed achieved by the object within the duration of the trajectory gap. We did consider other measures for scoring anomalies that had to be abandoned because the data and ground truth information we needed was not available. However, they are worth considering:

\begin{itemize}
\item \textbf{Attributes:} Abnormality can also be defined with a combination of voyage-specific and physical attributes (e.g., speed, headings, draft). For instance, a draft is inversely proportional to speed since loading of cargo lower the ship's speed but raises the draught and vice-versa. Anomaly detection based on directional change (i.e., sudden change in movement direction) can also be extended toward trajectory gaps. However, post-processing human interpretation is required along with ground truth verification. 

\item \textbf{Temporal reasoning:} Including time of day as a factor can refine the proposed abnormality measure. For instance, an increased number of gaps during night-time may be an alarm signal to authorities. The addition of this factor would further enrich the score and open the way towards more sophisticated measures of abnormality based on emerging and vanishing hotspots of trajectory gaps.

\item \textbf{Speed Distribution:} Monitoring a distribution based on an object's velocity profile may result in multiple geo-ellipses, which more accurately quantify the continuous space within the geo-ellipse derived from the object's maximum speed. A physical interpretation based on dwell time and speed could be demonstrated by studying a vessel's temporal profile based on previous routes or voyages. For instance, since dwell time and speed are inversely related, a ship within a gap could spend more time maneuvering in the shortest path and move at a lower speed compared to a ship reaching the boundary region of a geo-ellipse at max speed.\\
\end{itemize}

\textbf{Probabilistic Interpretation:} Monitoring the distribution of location traces also helps define probabilistic interpretations of trajectory gaps. For instance, Gaussian processes may provide the assumption of availability of location trace based on the Gaussian distribution instead of the precise location. Other probabilistic interpretations within a geo-ellipse can further be considered as an average case in contrast to geo-ellipse (worst case) and shortest path (best case) interpretations. \\

\textbf{Kernel Density Estimation:} Kernel density estimation (KDE) is a non-parametric way to estimate the probability density function of a random variable. KDE transforms point data into a continuous surface, estimating the density of points across the area. It is particularly useful in various fields like ecology, urban planning, and criminology to identify clusters or "hotspots" of activity or occurrences. Incorporating time as an additional dimension in kernel density estimation (KDE) enables the analysis of spatiotemporal data, revealing dynamic patterns, trends, and hotspots that change over both space and time. This approach involves preparing spatiotemporal data with precise timestamps, selecting appropriate spatial and temporal kernel functions, and optimizing bandwidths for both dimensions to accurately capture the scale of patterns.


\section{Related Work}
\label{section:Related_work}
The extensive amount of trajectory data being generated via location-acquisition services today has spurred new research in anomaly detection, pattern recognition, etc. The survey in \cite{zheng2015trajectory} gives a broad overview of trajectory data mining techniques related to pre-processing, data management, uncertainty, pattern mining, etc. Methods for detecting anomalous trajectories are included in a broad classification of anomaly detection methods \cite{chandola2009anomaly}, which also includes trajectory anomaly detection. The literature \cite{dodge2008towards} also provides a comprehensive taxonomy of movement patterns, classifying them as primitive, compound, individual, group, etc. There are several methods for determining the similarity of trajectories, including edit distance \cite{chen2005robust}, longest common sub-sequence (LCSS) \cite{lin1995fast}, and dynamic time warping (DTW) \cite{yi1998efficient}. A recent technique called TS-Join \cite{shang2017trajectory} uses both spatial and temporal factors in a two-step process to determine similarity. Another method, Strain-Join \cite{ta2017signature}, uses a signature-based approach to identify similar trajectories by creating unique signatures and eliminating dissimilar ones. Other techniques include using a sliding window approach \cite{bakalov2005efficient,chen2009design} to identify potential candidates and filtering out non-similar trajectories using time interval thresholds \cite{bakalov2006continuous,ding2008efficient}. For trajectories with silent periods, Clue-aware Trajectory Similarity \cite{hung2015clustering} uses clustering and aggregation based on movement behavior patterns. In terms of processing queries for spatiotemporal joins, methods based on 3D R-Trees, such as Spatiotemporal R-Tree and TB-Tree \cite{pfoser2000novel}, is used. However, none of the literature has addressed anomaly detection within trajectory gaps. In the maritime domain, Riveiro et al. \cite{riveiro2018maritime} provide a generalized view of maritime anomaly detection but do not cover trajectory gaps.

Uncertainty within trajectory gaps can be quantified based on behavioral patterns of moving objects, where many techniques perform linear interpolation assuming shortest path discovery. There are many realistic frameworks \cite{pallotta2013vessel,chen2013iboat,lei2016framework} that do quantify uncertainty within trajectory gaps, but they are loosely based on shortest path discovery. For instance, one study \cite{pallotta2013vessel} derives knowledge of maritime traffic in an unsupervised way to detect low-likelihood behaviors and predict a vessel's future positions. However, the movement behavior is assumed linear and relies on historical mobility behavior, which requires extensive training data. 

Besides the shortest path, the paper \cite{mazzarella2017novel} uses k-Nearest Neighbors (k-NN) imputation to estimate missing Received Signal Strength Indicator (RSSI) values in AIS vessel trajectory data. When a dropout (missing AIS message) is detected, the vessel's position is first predicted using a tracking system like the Constant Velocity Model (CVM). Then, the k-NN method finds the k AIS messages nearest in position to the predicted vessel position, and the RSSI value of the single nearest AIS message (k=1) is used as the imputed RSSI for the missing message, leveraging the spatial correlation in the trajectory data. A variant of KNN \cite{chen2001jackknife} was also studied, which is a resampling method used to estimate the variance of a statistic, such as a sample mean. It is useful when the theoretical variance is difficult to derive or when the data has been subject to complex procedures like imputation. Bayesian Networks \cite{hruschka2007bayesian} were used, where multiple Bayesian networks are constructed, one for each attribute with missing values. However, all of the above methods either consider the shortest path or the most frequented path, which does not take into account other object movement capabilities.

The space-time prism model \cite{miller1991modelling,pfoser1999capturing} is a more realistic estimation of trajectory gap because it identifies a larger spatial region surrounding a gap where an object could have potentially deviated from the predefined linear path. A number of applications based on joins \cite{pfoser1999capturing}, rendezvous patterns \cite{sharma2020analyzing,sharma2022towards,sharma2022analyzing}, and probabilistic interpretation \cite{winter2010directed} have been proposed, but they are limited to theoretical simulations and need real-world validation. Other probabilistic frameworks \cite{cheng2008probabilistic,trajcevski2010uncertain} incorporate space-time prisms but do not address the abnormal mobility behavior of an object. A recent work \cite{zhao2021applying} considers trajectory gaps but only focuses on the space-time prism and does not consider abnormal behavior within gaps. This paper uses a space-time prism model to capture abnormal gaps based on historical data, which allows us to distinguish abnormal gaps for possible anomaly hypotheses.

\section{Conclusion and Future Work}
\label{section:Conclusion_future_work}
Given multiple trajectory gaps and a signal coverage map based on historical object activity, we find
possible abnormal gaps, which are later sent to human analysts for ground truth verification (inspired by Human-in-the-loop framework in \cite{sharma2022understanding}). The problem is important since it addresses societal applications such as improving maritime safety and regulatory enforcement (e.g., detecting illegal fishing, trans-shipments, etc.) and many other application domains. Current methods rely on linear interpolation and our baseline algorithm Memo-AGD \cite{sharma2022abnormal} using geo-ellipses estimation to capture abnormal gaps. 

In this paper, we extended our previous work by proposing a space-time aware gap detection (STAGD) algorithm which leverages space-time partitioning to index and filters out non-intersecting gaps effectively. In addition, we incorporate a dynamic region merge (DRM) to compute abnormal gap measures more efficiently while preserving correctness and completeness. The results show better computation time and accuracy for the proposed STAGD+DRM compared to the baseline linear interpolation and Memo-AGD.


\textbf{Future Work:} Computing abnormal gaps in trajectories is time-consuming and challenging. We are investigating new indexing methods that are more efficient than hierarchical indexing and linear search and better suited for modeling gaps in regional space. We intend to use these techniques in other areas of application. Our plan also include other methods (e.g., Baeysian Estimation \cite{hruschka2007bayesian}, regression-based \cite{fox2015applied}) and provide more accurate estimation of signal coverage density using kernel density estimation and identifying spatiotemporal hotspots \cite{sharma2022spatiotemporal} with time as an additional dimension. In addition, we aim to validate our evaluation metrics with real-world data and consider other external factors (e.g., device malfunction, hardware failure, etc.). We also aim to improve our measurement of unusual gaps by incorporating more physics-based parameters (e.g., draft) and exploring more sophisticated models (e.g., kinetic prisms). Finally, we plan to study ways to identify anomalies that are based on deception, such as fake trajectories or gaps generated by users who manipulate their location devices.

\begin{acks}
This research is funded by an academic grant from the National Geospatial-Intelligence Agency (Award No. HM0476-20-1-0009, Project Title: Identifying Aberration Patterns in Multi-attribute Trajectory Data with Gaps). Approved for public release, N-CERTS NGA-U-2023-01847. We would also like to thank Kim Koffolt and the spatial computing research group at the University of Minnesota for their helpful comments and refinements.
\end{acks}

\bibliographystyle{ACM-Reference-Format}
\bibliography{sample-base}

\appendix

\section{Execution Trace of Memo-AGD Algorithm}
\label{Exec-Trace:Memo-AGD}
Figure \ref{fig:ProposedExection} shows the execution trace of the Memo-AGD algorithm. At $t_{s}$ = 1, Step 1 initializes $G_{i}$ with variables $G_{i}^{LU}$ and $G_{i}^{Obs}$ which are later added to the empty Observed List. At $t_{s}$ = 2, <$A^{Obs}$> copies the \textit{current} Observed List (i.e., only <A>) to <$B^{Obs}$> after satisfying spatial and temporal overlap conditions. If true, we perform Step 2 where the resultant shape of <$A$> $\cup$ <$B$> is saved in both <A> and <B>. The variables LU gets updated with <A,B> for both <$A^{LU}$> and <$B^{LU}$>. Finally, removing <A> from <$B^{Obs}$> results in an empty list, and the loop terminates. At $t_{s}$ = 3, <C> interacts with the resultant shape of both <A> and <B> (i.e., <$A$> $\cup$ <$B$> and result <$A^{LU}$>,<$B^{LU}$> and <$C^{LU}$> as <A,B,C> (i.e., maximal sets). However, at $t_{s}$ = 8, <D> does not intersect spatially with <A> and will skip intersection operations with other elements of $A^{LU}$ (i.e., <B> and <C>) via $G^{Obs}$ -$G^{LU}$ operation and gets added to the Observed List and result in performance speedup as compared to the AGD algorithm. A similar operation is done at  $t_{s}$ = 9, where <E> does not interact with <A> but does interact with <D>, resulting in $D^{LU}$ = <D, E>. Finally, <F> does not interact with any of the elements, causing it to be directly saved as a maximal set <F>. The rest of the steps remain the same as the baseline, except we gather all maximal sets from the lookup table of each $G^{LU}$.

\begin{figure*}[htp]
    \includegraphics[width=1.00\textwidth]{./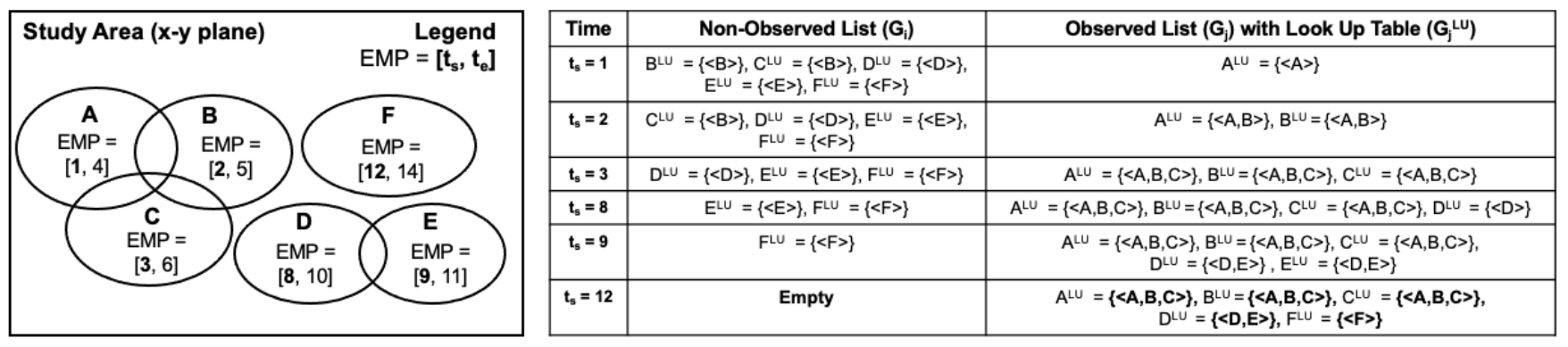}
    \centering
    \captionsetup{justification=centering}
    \caption{Execution Trace of Proposed Memo-AGD Algorithm}
    \label{fig:ProposedExection}
\end{figure*}

\section{Acceleration interpretation with Kinetic Prism Model}
\label{Kinetic Prism:acceleration}
\begin{figure}[h]
    \centering
    \includegraphics[width=0.75\textwidth]{./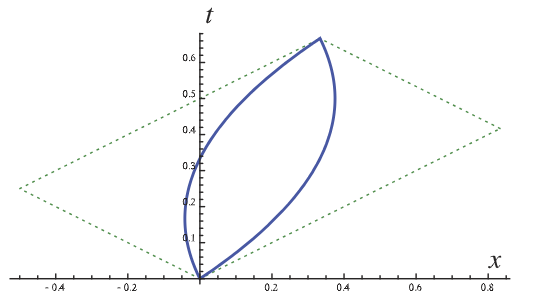}
    \caption{An example of an overlay of a kinetic prism (blue lines) with a classical prism (green dotted lines) \cite{kuijpers2009modeling}.\\}
    \label{fig:kinetic}
\end{figure}

Incorporating acceleration into the interpolation process results in a smoother and more quadratic curve within the interpolated region, compared to the traditional space-time prism approach. As depicted in Figure \ref{fig:kinetic}, we illustrate a conventional space-time prism with dashed green lines, contrasted against a kinetic prism outlined by solid blue lines. This comparison reveals a notable decrease in the area encompassed by the kinetic prism as opposed to the classical one, attributable to the minor temporal gap between the anchor points. The kinetic prism's approximation of a classical space-time prism improves with increased permissible acceleration or with an expansion in the time interval between anchor points when considered in relation to their spatial separation.

\end{document}